\definecolor{Gray}{gray}{0.9}
\theoremstyle{plain}
\newtheorem{theorem}{Theorem}[section]
\newtheorem{lemma}[theorem]{Lemma}
\theoremstyle{definition}
\newtheorem{assumption}[theorem]{Assumption}
\theoremstyle{remark}
\newcommand{\hzz}[1]{{\color{magenta}{#1}}}
\icmltitlerunning{AttNS: Attention-Inspired Numerical Solving For Limited Data Scenarios}
\begin{document}

\twocolumn[
\icmltitle{AttNS: Attention-Inspired Numerical Solving For Limited Data Scenarios}






\begin{icmlauthorlist}
\icmlauthor{Zhongzhan Huang$^\dag$}{sysu,pc}
\icmlauthor{Mingfu Liang}{nw}
\icmlauthor{Shanshan Zhong}{sysu}
\icmlauthor{Liang Lin}{sysu,pc}
\end{icmlauthorlist}

\icmlaffiliation{sysu}{Sun Yat-sen University, China}
\icmlaffiliation{pc}{Peng Cheng Laboratory, China}
\icmlaffiliation{nw}{Northwestern University, USA. $\dag$ Informal short-term visiting at Peng Cheng Laboratory for two weeks}
\icmlcorrespondingauthor{Liang Lin}{linliang@ieee.org}
\icmlkeywords{Machine Learning, ICML}

\vskip 0.3in
]



\printAffiliationsAndNotice{} 

\begin{abstract}

We propose the attention-inspired numerical solver (AttNS), a concise method that helps the generalization and robustness issues faced by the AI-Hybrid numerical solver in solving differential equations due to limited data. AttNS is inspired by the effectiveness of attention modules in Residual Neural Networks (ResNet) in enhancing model generalization and robustness for conventional deep learning tasks. Drawing from the dynamical system perspective of ResNet, We seamlessly incorporate attention mechanisms into the design of numerical methods tailored for the characteristics of solving differential equations. Our results on benchmarks, ranging from high-dimensional problems to chaotic systems, showcase AttNS consistently enhancing various numerical solvers without any intricate model crafting. Finally, we analyze AttNS experimentally and theoretically, demonstrating its ability to achieve strong generalization and robustness while ensuring the convergence of the solver. This includes requiring less data compared to other advanced methods to achieve comparable generalization errors and better prevention of numerical explosion issues when solving differential equations.  \hzz{https://github.com/dedekinds/NeurVec}.

\end{abstract}

\section{Introduction}
\label{sec:intro}

The AI-Hybrid numerical solvers (AHS) are emerging methods~\cite{mishra2018machine,san2018extreme,bar2019learning,kochkov2021machine,subel2021data,bruno2022fc,list2022learned,dresdner2022learning,frezat2022posteriori,huang2023fast,xu2023} for solving differential equations, designed to integrate classical numerical techniques with deep learning technology. These approaches leverage high-resolution data to learn corrections~\cite{um2020solver} for low-resolution numerical solutions. They aim to combine the strengths of both techniques – the reliability of classical numerical methods and the expressive power of neural networks – to mitigate the inherent speed-accuracy trade-off in solving differential equations.
For example, for an ordinary differential equation (ODE) 
\vspace{-0.2cm}
\begin{equation}
\text{d}\mathbf{u(t)}/\text{d}t = \mathbf{f}[\mathbf{u}(t)], \mathbf{u}(0) = \mathbf{c}_0,
    \label{eq:ode}
\vspace{-0.2cm}
\end{equation}
where $\mathbf{u(t)}$ is a time-dependent $d$-dimensional state and the $\mathbf{c}_0 \in \mathbb{R}^d$ is initial condition. The forward AHS of Eq.~(\ref{eq:ode}) can usually ~\cite{dresdner2022learning} be written as
\begin{equation}
    \mathbf{u}_{n+1} = \mathbf{u}_{n} + \underbrace{S(\mathbf{f}, \mathbf{u}_{n}, \Delta t_c)\Delta t_c}_{\text{Low-resolution term}} + \underbrace{\text{Net}(\mathbf{u}_{n}|\mathbf{\phi},\mathbb{D}_f)}_{\text{Correction term}},
    \label{eq:aihybrid}
    \vspace{-0.2cm}
\end{equation}
where $\text{Net}(\cdot)$ is a neural network with learnable parameters $\phi$, $S(\cdot)$ is a numerical integration scheme $\mathbb{D}_f$ denotes the high-resolution data, i.e., the data with high-precision generated by fine step size $\Delta t_f$. 
A coarse step size, $\Delta t_c$, accelerates computation, and the associated decrease in accuracy is compensated by the correction term $\text{Net}(\mathbf{u}_{n}|\mathbf{\phi},\mathbb{D}_f)$ trained by $\mathbb{D}_f$, achieving a good balance between speed and accuracy in solving differential equations.

However, the performance of current AHS is heavily reliant on the quantity of high-quality training data due to the data-driven deep learning approach adopted by the correction term, and \textbf{the obtaining of sufficient training data is very expensive}. For instance, if we use data with $\Delta t_f=0.001$ to train the correction term for $\Delta t_c=0.1$, generating a single trajectory of data would require over 100$\times$ additional computations, which are hard to be accelerated by parallelization due to the iterative nature of Eq.(\ref{eq:aihybrid}). Besides, the high demand for data imposed by complex equations \cite{huang2023fast} further exacerbates the acquisition cost.   Given these challenges, we ask a critical question:

\ul{\textit{Can we improve the AHS for effective computations even in \textbf{limited data scenarios}? }}

\begin{figure*}[t]
\centering
\includegraphics[width=0.99\linewidth]{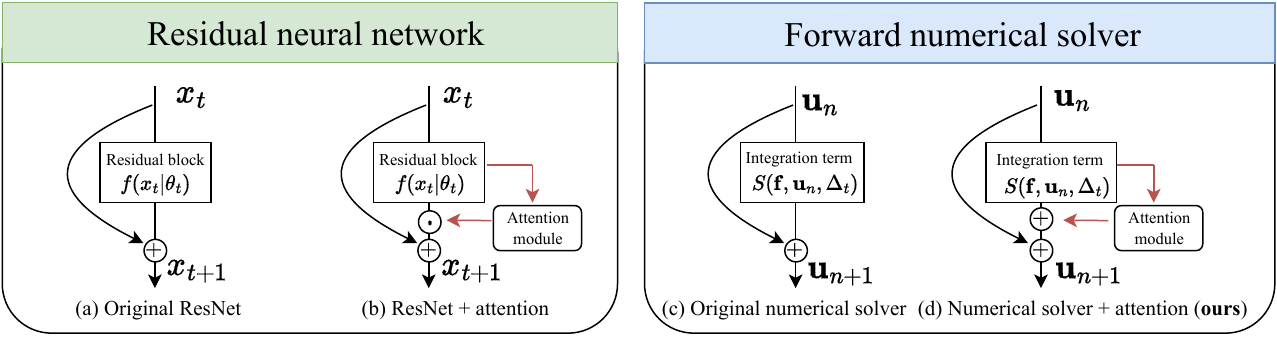}
\caption{The correspondence of (a) ResNet and (c) forward numerical solver. (d) is a numerical solver with the attention mechanism (ours) inspired by the structure of (b) ResNet with attention. $\odot$ is element-wise multiplication and $\oplus$ is the addition operator. The use of $\oplus$ is \textbf{tailored for} solving differential equations and see Section \ref{sec:methods} for details. }
\label{fig:jg2}
\end{figure*}
To answer this question, in this paper, we conduct a comprehensive analysis using the ODE shown in Eq.~(\ref{eq:ode}) as examples. First, we present the specific challenges for AHS in limited data scenarios in Section \ref{sec:challenges}. In addition to the inherent generalization issues of neural networks due to limited data, robustness issues~\cite{liang2022stiffnessaware} related to chaos in solving differential equations need to be carefully considered. In response to these challenges, we note that the attention mechanism module in Residual Neural Network (ResNet) has been widely validated to effectively enhance the model's generalization and robustness~\cite{hu2018squeeze,huang2020dianet,woo2018cbam,2020ECA,liang2020instance,Zhong2023ASRAS} in conventional deep learning tasks (see Section \ref{sec:attention}). Also, considering the correspondence between ResNet and numerical solvers established in the dynamical system perspective of ResNet (see Section \ref{sec:dynamical} and Fig.\ref{fig:jg2}), we follow the characteristics of differential equation solving and seamlessly integrate the attention mechanism module into the numerical solver, proposing a simple yet effective Attention-Inspired Numerical Solver (AttNS) in Section \ref{sec:methods}.

Through experiments with several standard benchmarks of high-dimensional or chaotic systems, including the spring-mass system, the elastic pendulum, and the $K$-link pendulum, the results in Section \ref{sec:exps} show that AttNS consistently outperforms existing state-of-the-art AHS in limited data scenarios, without the need for intricate model crafting. Next, in Section \ref{sec:dis}, we conduct experimental and theoretical analyses of AttNS, proving its strong generalization and robustness abilities akin to the attention mechanism in conventional deep learning tasks, while ensuring solver convergence. This includes achieving equivalent generalization errors with less data compared to other advanced methods and better preventing numerical explosion issues in solving differential equations. Finally, we discuss the limitations of AttNS. We summarize our contributions as follows:
\vspace{-0.2cm}
\begin{itemize}
\item For effective AHS in limited data scenarios, we incorporate the attention mechanism into AHS and proposed a simple-yet-effective method AttNS. The standard benchmarks show that AttNS consistently outperforms other advanced AHS methods.
\vspace{-0.2cm}
\item We conducted experimental and theoretical analyses for AttNS, demonstrating its strong generalization and robustness capabilities while ensuring solver convergence. Finally, we discuss the limitations of AttNS.
\end{itemize}

\section{Preliminaries and Related Works}
\label{sec:preliminary}

\subsection{The Solving of Differential Equation}
For a given equation as shown in Eq.~(\ref{eq:ode}), classical forward numerical methods, such as the Euler method \cite{shampine2018numerical}, Runge-Kutta method \cite{butcher2016numerical}, etc., solve it using the iterative formula 
\vspace{-0.2cm}
\begin{equation}
\mathbf{u}_{n+1} = \mathbf{u}_{n} + S(\mathbf{f}, \mathbf{u}_{n}, \Delta t)\Delta t, \quad \mathbf{u}_0 = \mathbf{c}_0.
    \label{eq:forward}
    \vspace{-0.2cm}
\end{equation}
Different methods have different $S$. For example, for the Euler method \cite{shampine2018numerical}, we have $S(\mathbf{f}, \mathbf{u}_{n}, \Delta t)\Delta t=\mathbf{f}(\mathbf{u}_{n})\Delta t$, where $\Delta t$ is a given step size, and $\mathbf{u}_{n}\in \mathbb{R}^d$ is an approximated solution at time $\sum_{i=0}^n \Delta t$. The AI-Hybrid numerical solver (AHS) considered in this paper, as shown in Eq.~(\ref{eq:aihybrid}), introduces a correction term to help high-precision solving of the differential equation even with larger step sizes which makes solving faster. For other methods utilizing deep learning techniques for solving, such as direct fitting using neural networks ~\cite{geneva2022transformers,liang2021solving}, operator-based methods focusing on replacing differential operators ~\cite{li2021fourier,lu2021learning}, neuralODE-based methods~\cite{chen2018neural,kidger2021hey,dupont2019augmented}, Hamiltonian-based methods ~\cite{greydanus2019hamiltonian,chen2019symplectic,liang2022stiffnessaware}, and physics-informed neural networks ~\cite{choudhary2020physics,raissi2019physics,ji2021stiff,cai2021physics}, they're \ul{\textbf{hard to conduct a fair comparison with AHS due to their fundamentally different paradigms}}. Therefore, this paper primarily focuses on AHS instead of other deep learning-based solving methods.

\vspace{-0.2cm}
\subsection{Dynamical System Perspective of ResNet}
\label{sec:dynamical}

As shown in Fig.\ref{fig:jg2}(a), the simplified residual block in ResNet~\cite{he2016deep} can be written as
\vspace{-0.2cm}
\begin{equation}
    x_{t+1} = x_t + f(x_t|\theta_t),
    \label{eq:resnet}
    \vspace{-0.2cm}
\end{equation}
where $x_t \in \mathbb{R}^d$ is the input of neural network $f(\cdot|\theta_t)$ with the learnable parameters $\theta_t$ in $t$-${\rm th}$ block . Several recent studies \cite{weinan2017proposal,queiruga2020continuous,zhu2022convolutional,meunier2022dynamical} have uncovered valuable connections between residual blocks and dynamic systems. i.e., the residual blocks can be interpreted as one step of forward numerical methods in Eq.~(\ref{eq:forward}) and Fig.\ref{fig:jg2}(c). The initial condition $\mathbf{u}_0 = \mathbf{c}_0$ in Eq.~(\ref{eq:forward}) corresponds to the initial input $x_0$ of the network, and $\mathbf{u}_t$ corresponds to the input feature $x_t$ in $t$-${\rm th}$ block. 
The output of neural network $f(\cdot|\theta_t)$ in $t$-${\rm th}$ block can be regarded as an integration $S(\mathbf{u}_t,\mathbf{f},\Delta t)$ with step size $\Delta t$ and numerical integration scheme $S$. Given the above connection, some dynamical system theories \cite{chang2017multi,chen2018neural,huang2022layer,DBLP:conf/icml/LuZLD18} can be transferred to the analysis of ResNet.

\vspace{-0.3cm}

\subsection{The Challenges for AHS in Limited Data Scenarios}
\label{sec:challenges}

We first briefly define limited data scenarios (LDS). Generally, the performance of AHS is positively correlated with the training data size~\cite{huang2023fast}, and reaches saturation at data quantity $N_b$, i.e., further increasing data size hardly boosts performance. In this paper, LDS are situation that the data size is $\lfloor pN_b \rfloor$, where $1\leq p \leq 0.5$. In fact, the LDS easily occurs as $N_b$ required for a given equation is not known before solving and generating data. A sound way is to make sure, as much as possible, that the solver maintains a good enough solving performance even in LDS, i.e., the goal of this paper. The main challenges of this goal are:

 (1) \ul{\textbf{Generalization issue}}. When the training data size is insufficient, the deep learning module in AHS suffers from the inherent generalization problem ~\cite{zhang2021understanding,neyshabur2017exploring}, i.e., poor prediction of the inputs outside the training data distribution. This results in a trained correlation term that cannot accurately compensate for the accuracy loss caused by the low-resolution term. 

 (2) \ul{\textbf{Robustness issue}}. Many differential equations are chaotic~\cite{greydanus2019hamiltonian}, i.e. sensitive to the small perturbations in inputs, which makes it hard for AHS to fundamentally model the ubiquitous and elusive randomness of chaos with only limited data, leading to poor generalization~\cite{abu2012learning}. It is easy to accumulate and propagate error for a non-robust AHS during the iterative solution process in Eq.~(\ref{eq:aihybrid}), leading to a numerical explosion.

\vspace{-0.3cm}
\subsection{Attention Mechanism for ResNet}
\label{sec:attention}

For the residual block mentioned in Eq.~(\ref{eq:resnet}), the attention mechanism\footnote{Unlike Transformer based attention~\cite{vaswani2017attention}, the attention mechanisms considered in this paper are for ResNet, such as SENet~\cite{hu2018squeeze}. } can be formulated as 
\vspace{-0.2cm}
\begin{equation}
x_{t+1} = x_{t} + f(x_t|\theta_t) \odot \alpha_t ,
    \label{eq:att}
    \vspace{-0.2cm}
\end{equation}
where $\alpha_t = Q[f(x_t|\theta_t)]$ is $t$-${\rm th}$ attention module~\cite{hu2018squeeze,woo2018cbam} and $\odot$ is element-wise multiplication, which also shown in Fig.\ref{fig:jg2}(b).
many recent works ~\cite{hu2018squeeze,huang2020dianet,woo2018cbam,2020ECA,liang2020instance, Zhong2023ASRAS} validate the effectiveness of these attention modules to improve model's generalization and robustness in \textbf{conventional deep learning tasks}, such as image classification, detection, segmentation, style migration, etc. Specifically,

 (1) \ul{\textbf{Improving generalization}}. The attention mechanism has the capability to adaptively learn the weights of features across various tasks and input data~\cite{qin2021fcanet,huang2020dianet,2020ECA}. This flexibility enables the model to focus on task-relevant information, adapt to new data, and effectively capture intricate relationships, ultimately enhancing the generalization performance of various conventional deep learning tasks.

 (2) \ul{\textbf{Enhancing robustness}}. The attention mechanism has been theoretically proven to mitigate small perturbations in the input, thereby improving robustness~\cite{Zhong2023ASRAS}. Specifically, let $\epsilon$ represent the perturbation from noise, satisfying $\|x_0^\epsilon - x_0\| = \epsilon$. The upper bound of the error $\|x_t^\epsilon - x_t\|$ at the $L$-th layer, caused by $x_0^\epsilon$ as input, is given by $\epsilon\prod_{t=1}^{L-1} \left(1+\alpha_t\|\theta_t\|_2\right)$ according to Eq~(\ref{eq:att}), where the adaptive control is exerted by the attention weights $\alpha_t$. This perspective is supported by experimental evidence in tasks such as style transfer and noise attacks~\cite{liang2020instance,Zhong2023ASRAS}.

\vspace{-0.3cm}
\section{Method}
\label{sec:methods}
As mentioned in Section \ref{sec:attention}, attention mechanisms can help model generalization and robustness issues in conventional deep learning tasks. These issues, as revealed in Section \ref{sec:challenges}, happen to be the primary challenges for AHS in limited data scenarios. Therefore, it is natural for us to consider seamlessly transferring the attention mechanism to the numerical solver by exploiting the correspondence between the forward numerical solver and ResNet mentioned in Section \ref{sec:dynamical}. Some previous works also considered using attention to help solution~\cite{geneva2022transformers,takamoto2023learning,rodriguez2022physics,hemmasian2023reduced}, \textbf{but they are not for AHS and not the attention mechanisms in Eq.~(\ref{eq:att})}.

\begin{figure*}[t]
\centering
\includegraphics[width=0.99\linewidth]{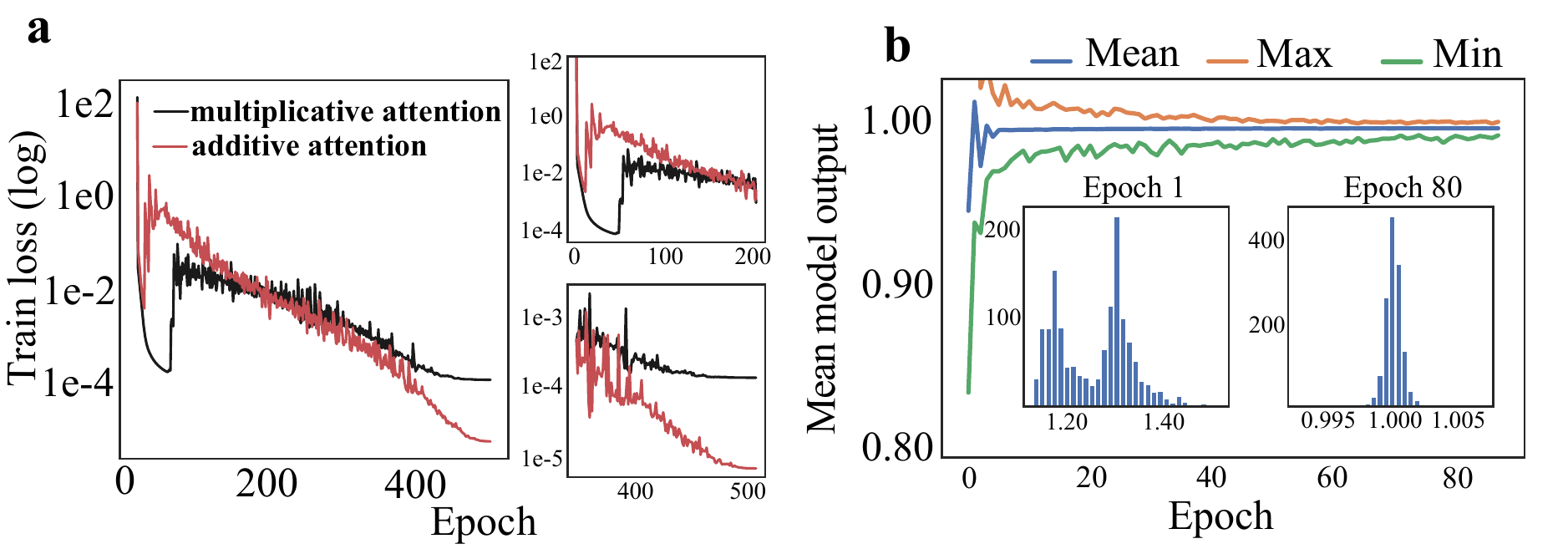}
\caption{\textbf{a}. The loss curves for two kinds of AttNS-m and AttNS. The loss minimization of multiplicative attention is fast at first and then slow during the last epochs, which have a local minimum with a large loss; \textbf{b}. The mean of attention value $Q[\hat{S}|\phi]$ (blue) while using Eq.(\ref{eq:mult}), which quickly converges to the vicinity of constant 1 during training.}
\label{fig:multi}
\end{figure*}

\textbf{(1) A straightforward baseline (AttNS-m).} According to  Eq.~(\ref{eq:att}), the straightforward numerical solving with attention can be set as Eq.~(\ref{eq:mult}) with multiplication operator.
\vspace{-0.15cm}
\begin{equation}
\text{AttNS-m:} \quad \hat{\mathbf{u}}_{n+1} = \hat{\mathbf{u}}_{n} + \hat{S}\Delta t_{c} {\color{blue} \odot Q[\hat{S}|\phi]},
    \label{eq:mult}
    \vspace{-0.15cm}
\end{equation}
where $\hat{S} = S(\mathbf{f}, \hat{\mathbf{u}}_{n}, \Delta t_c)$ and $\phi$ is the learnable parameters of attention module $Q[\cdot|\phi]$. The training loss is defined as the $\ell_2$-Squared distance between the estimated trajectory $\hat{\mathbf{u}} = [\hat{\mathbf{u}}_1,...,\hat{\mathbf{u}}_{N}]$ and the ground truth trajectory from the dataset $\mathbb{D}(\text{traj}(\mathbf{u}))$ following previous works~\cite{dresdner2022learning}:
\begin{equation}
R_e = \lambda\cdot\|\hat{\mathbf{u}} - \mathbf{u}\|_2^2/N,
    \label{eq:loss}
\end{equation}
where $\lambda$ is a penalty coefficient. $\lambda$ can alleviate the problem that the value of $R_e$ is too small due to the magnitude of $\mathbf{u}$ in some specific differential equations being too small. This is because, in such a scenario, the gradient will be small enough and affect the optimization of the learnable parameter $\phi$. For $\hat{S} \in \mathbb{R}^d$, the architecture of the attention module~\cite{xu2023} in Eq.~(\ref{eq:mult}) is
\begin{equation}
Q[\hat{S}|\phi] = \mathbf{W_h}\circ \mathbf{a}\circ\cdots \circ \mathbf{W_2}\circ \mathbf{a}\circ \mathbf{W_1}[\hat{S}],
    \label{eq:q}
\end{equation}
where $\mathbf{a}$ is rational activation function \cite{boulle2020rational}; $\mathbf{W_i},i $ = $ 2,\cdots,h-1$ are $d_1\times d_1$ matrices, $\mathbf{W_h} \in \mathbb{R}^{d\times d_1}$ and $\mathbf{W_1} \in \mathbb{R}^{d_1 \times d}$. We set $d_1=1024$ and $h=2$ by default. In Section \ref{sec:dis}, we will further discuss the concise design in Eq.~(\ref{eq:q}) is tailored for efficient solving of AHS, on the one hand, the small number of parameters enables faster inference speed, and on the other hand, it can be proven that this architecture has the desirable Lipschitz continuity, which provides a strong guarantee for the convergence analysis of our proposed solver. Furthermore, we compare this attention module architecture with other variants.

\begin{algorithm}[tb]
\small
   \caption{The processing of AttNS and AttNS-m.}
    \textbf{Input:}  A coarse step size $\Delta t_c$; The number of step $N$ which satisfied the evaluation time $T = N\Delta t_c$; A given equation $\text{d}\mathbf{u}/\text{d}t = \mathbf{f}(\mathbf{u}), \mathbf{u}(0) = \mathbf{c}_0$; A given numerical integration scheme $S$; The high-quality dataset $\mathbb{D}(\text{traj}(\mathbf{u}))$. The attention module $Q[\cdot|\phi]$; The learning rate $\eta$.
    
    \textbf{Output:} The learned attention module $Q[\cdot|\phi]$.
   
\begin{algorithmic}
\For{\text{epoch} from 0 to \text{Total epoch}}
\State\algorithmiccomment{Estimate the trajectories}
    \State Sample $\mathbf{u} = [\mathbf{u}_0,\mathbf{u}_1,...,\mathbf{u}_N] \sim \mathbb{D}(\text{traj}(\mathbf{u}))$;
    \State $\hat{\mathbf{u}}_0 \gets \mathbf{u}_0$;
    \For{$t$ from 0 to $N-1$}

        \State Calculate integration term $\hat{S} \gets S(\mathbf{f},\mathbf{u}_t,\Delta t_c)$;

        \State Calculate attention term $Q[\hat{S}|\phi]$;

        \State Get $\hat{\mathbf{u}}_{t+1}$ by Eq.(\ref{eq:mult}) {\color{blue}for AttNS-m} or Eq.(\ref{eq:ours}) {\color{red}for AttNS};

    \EndFor  

    \State\algorithmiccomment{Update the attention module}
\State $\hat{\mathbf{u}} \gets [\hat{\mathbf{u}}_1,...,\hat{\mathbf{u}}_{N}]$;
\State Calculate the loss $R_e$ by Eq.(\ref{eq:loss});
\State Update the parameters $\phi$ by $\phi \gets \phi - \eta\nabla_\phi R_e$;

\EndFor 
\State\Return the attention module $Q[\cdot|\phi]$
\end{algorithmic}
\label{alg:11}
\end{algorithm}

\begin{figure*}[t]
\centering
\includegraphics[width=0.96\linewidth]{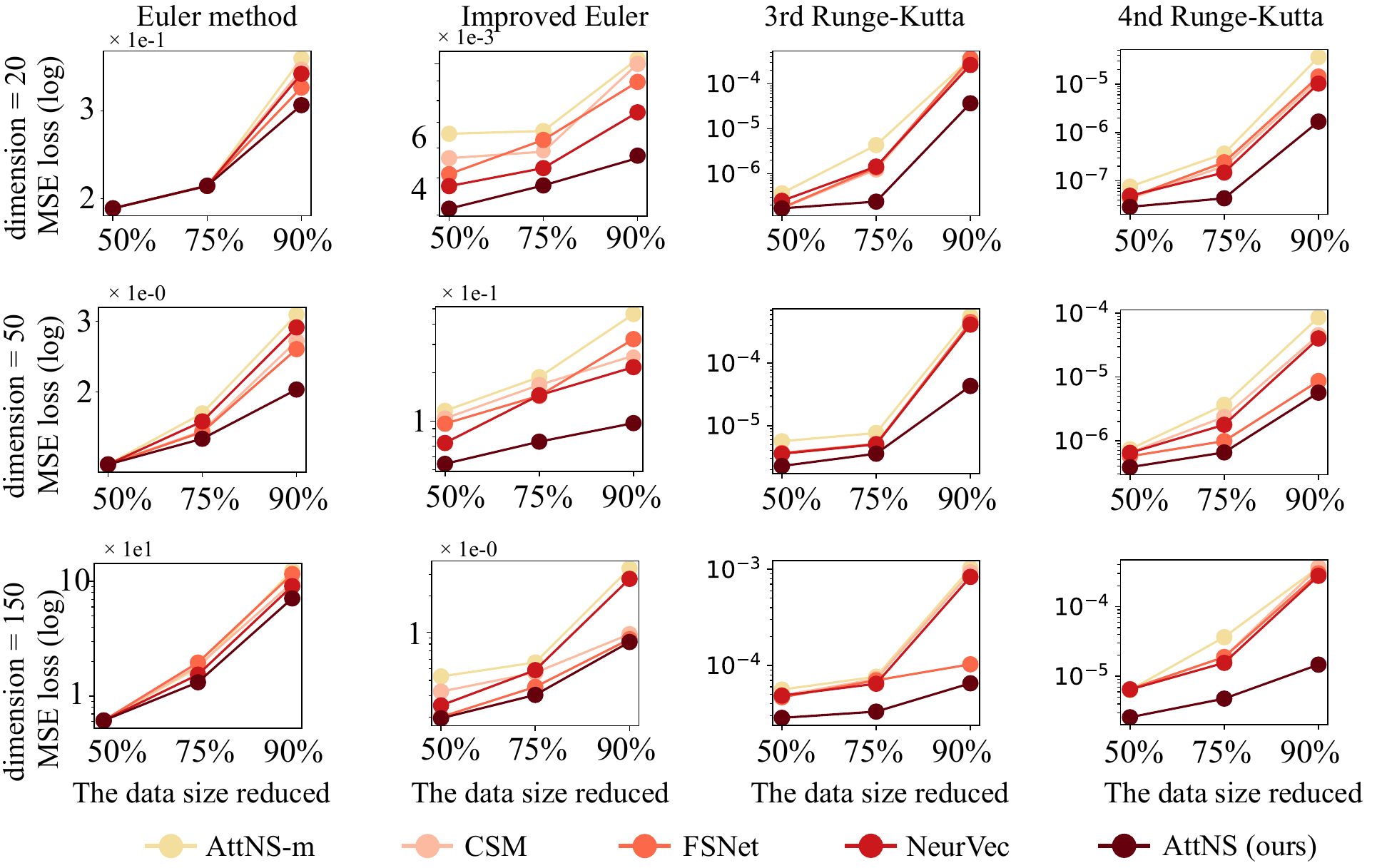}
\caption{The results of various AHS for four forward numerical solvers on the spring-mass system with different dimensions. ``50\%" denotes that we reduce the amount of training data by 50\%. \textbf{The smaller the loss, the better the performance}.}
\label{fig:1}
\end{figure*}

\textbf{(2) A baseline tailored for AHS (AttNS).} Actually, in mathematics, the attention of AttNS-m in Eq.~(\ref{eq:mult}) with multiplicative operator to calibrate the error from low-resolution term may impact the accuracy. 
The reasons are that (i) the step size $\Delta t_c$ in $\hat{S}$ used for discretization is usually not equal to the new step size $\Delta t_c^\prime := \Delta t_c\odot Q[\hat{S}|\phi]$ in AttNS-m, which isn't satisfied the mathematical correctness in Eq.~(\ref{eq:forward}); (ii) During the learning process of $Q[\hat{S}|\phi]$, this mathematical correctness further forces it to expend some optimization efforts primarily fitting a constant vector $\mathbf{I}$ with all elements being 1, which will bring negative impacts on its accuracy \cite{wang2022efficient}. Specifically, in Eq.~(\ref{eq:aihybrid}), the correction term aims to compensate for the errors of the low-resolution term, and hence the magnitude of the compensation term $\epsilon_c$ will be small. Then if multiplicative attention is used, we have $\hat{S}\Delta t_c +\epsilon_c = \hat{S}\Delta t_c\odot Q[\hat{S}|\phi]$,
i.e., $(Q[\hat{S}|\phi] - \mathbf{I})\hat{S}\Delta t_c = \epsilon_c$. Since $\|\hat{S}\Delta t_c\| \gg \|\epsilon_c\|$, we have $Q[\hat{S}|\phi] \approx \mathbf{I}$. This analysis is also supported by Fig.~\ref{fig:multi}b.

To mitigate this issue, a simple-yet-effective strategy is to normalize the attention as $\mathbf{I} + \widetilde{Q}[\hat{S}|\phi]$ and just learn the residual part $\widetilde{Q}[\hat{S}|\phi]$, and we can rewrite Eq.~(\ref{eq:mult}) as
\begin{equation} 
	\begin{aligned}
 \hat{\mathbf{u}}_{n+1} &= \hat{\mathbf{u}}_{n} + \hat{S}\Delta t_{c} {\color{blue} \odot \left( \mathbf{I} + \widetilde{Q}[\hat{S}|\phi] \right)} \\
    &= \hat{\mathbf{u}}_{n} + \hat{S}\Delta t_{c}  \underbrace{{\color{red}  + \hat{S}\Delta t_{c}\odot\widetilde{Q}[\hat{S}|\phi]}}_{\text{Additive attention}}, \\
	\end{aligned}
	\label{eq:temp111232323}
	\end{equation}	
where the term $\hat{S}\Delta t_{c}\odot\widetilde{Q}[\hat{S}|\phi]$ can be regarded as the additive attention~\cite{Wu2021FastformerAA,Cheuk2021RevisitingTO,Huang2022MRPNetSD,Li2021AdditiveAF,Gao2021MultiscaleFN} correction term. Briefly, we set the iterative formula of AttNS to be
\begin{equation}
    \text{AttNS:} \quad \hat{\mathbf{u}}_{n+1} = \hat{\mathbf{u}}_{n} + \hat{S}\Delta t_{c} {\color{red} + Q[\hat{S}|\phi]}.
    \label{eq:ours}
\end{equation}
From Eq.~(\ref{eq:temp111232323}), AttNS ensures the benefits of attention while satisfying mathematical correctness over AttNS-m, as illustrated in Fig.\ref{fig:multi}a. In terms of the formula, the primary distinction between AttNS and general AHS lies in their input, where the former is $ S(\mathbf{f}, \hat{\mathbf{u}}_{n}, \Delta t_c)$ and another is $\hat{\mathbf{u}}_{n}$. In fact, this difference is sufficient for AttNS to achieve better generalization error and ensure robustness, which will be discussed in detail in Section \ref{sec:dis}. Furthermore, the performance gap introduced by the input of attention has also been observed in conventional deep learning tasks \cite{guo2020spanet,hu2018squeeze,huang2023understanding}, further validating the rationale behind AttNS. We summarise AttNS and AttNS-m in Alg.~\ref{alg:11}.

\section{Experiments}
\label{sec:exps}

In this section, we consider two perspectives to verify the effectiveness of the proposed AttNS: (1) on different numerical solvers and (2) different differential equation benchmarks. Specifically, since AttNS is one of AHS, we use many commonly used forward numerical solvers as backbones to evaluate the enhancement achieved by AttNS, including the Euler method, Improved Euler method, 3rd and 4th order Runge-Kutta methods (see Appendix for details). On the other hand, to demonstrate that AttNS is competent for complex differential equations, we further experiment on two chaotic dynamical systems on the 4th order Runge-Kutta method, i.e., k-link pendulum and elastic pendulum. For all experiments in this section, for fair comparisons, we follow the settings in \cite{huang2022accelerating,Chen2020Symplectic} for all generations of initial conditions and metrics. In this section, we consider AttNS, AttNS-m, NeurVec~\cite{huang2023fast}, CSM~\cite{dresdner2022learning} and FSNet~\cite{xu2023} as the baselines of AHS.


\begin{figure}[t]
\centering
\includegraphics[width=0.99\linewidth]{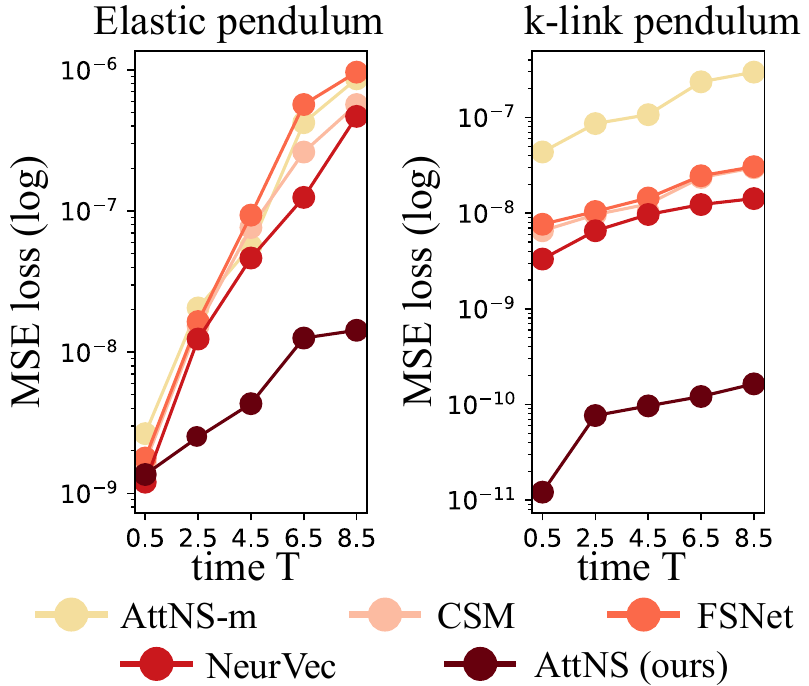}
\caption{The simulation on different chaotic systems with step size $ 1e-1$. The Mean Squared Error (MSE) loss on the test set for (a) pendulum and (b) elastic pendulum.     }
\label{fig:2}
\vspace{-0.58cm}
\end{figure}

(1) \ul{\textbf{On different numerical solvers}}. In this section, we consider a high-dimensional linear system, namely the spring-mass system, and four forward numerical solvers (see Appendix for details).
In a spring-mass system, there are $d$ masses and $d+1$ springs connecting in sequence, and they are placed horizontally with two ends connected to two fixed blocks. The corresponding ODE of this system is
\begin{equation}
\small
\frac{\text{d}}{\text{d} t}\left(\begin{array}{l}
q_i \\
p_i
\end{array}\right)=\left(\begin{array}{l}
p_i / m \\
k_i\left(q_{i-1}-q_i\right)+k_{i+1}\left(q_{i+1}-q_i\right)
\end{array}\right),
    \label{eq:spring}
\end{equation}
$i = 1,2,\cdots,d, q_0 = q_{d+1} = 0$, where $m_i$ and $k_i$ are the mass of the $t$-${\rm th}$ mass and force coefficient of the $t$-${\rm th}$ spring, respectively. The momentum and the position of $t$-${\rm th}$ mass are denoted as $p_i$ and $q_i$. We adopt the coarse step size $\Delta t = 2e-1$ for the numerical solver and the fine step size $1e-3$ for training the AHS. 

The experiment results at evaluation time $T$=20 are shown in Fig.~\ref{fig:1}. For the Euler method with low simulation accuracy, our AttNS achieves consistent performance with the state-of-the-art~(SOTA) AHS for the spring-mass system in different dimensions. For other numerical solvers with higher accuracy, AttNS can better enhance the solver than others. Even for the spring-mass system with increasing dimensions, although the difficulty of the simulation increases, AttNS can still maintain the performance. In addition, we can observe that our AttNS can achieve similar performance as the SOTA AHS with less data size, showing that we are capable of training an efficient and robust AHS with a small amount of data. AttNS-m performs poorly in most different settings, which is consistent with the analysis of Section \ref{sec:methods}.

(2) \ul{\textbf{On different chaotic systems}}. 
We reduce the amount of training data by 50\% and use two chaotic dynamical systems to verify the effectiveness of our AttNS, i.e., the elastic and k-link pendulum. The elastic pendulum considers a ball without volume connected to an elastic rod. Under the effect of gravity and force of spring~\cite{breitenberger1981elastic}, the motion of the ball will be chaotic, and its ODE is 
\begin{equation}
\small
\frac{\text{d}}{\text{d} t}\left(\begin{array}{l}
\theta \\
r\\
\dot{\theta} \\
 \dot{r} \\ 
\end{array}\right)=\left(\begin{array}{l}
\dot{\theta}\\
\dot{r}\\
\frac{1}{r}(-g \sin\theta-\dot{\theta}\dot{r})\\
r\dot{\theta}^2-\frac{k}{m}(r-l_0)+g\cos\theta
\end{array}\right),
    \label{eq:pend}
\end{equation}
where $k, m, l_0$, and $g$ are related constants. There are two variables $\theta$ and $r$ in Eq.~(\ref{eq:pend}). Specifically, $r$ is the length of the spring, and $\theta$ is the angle between the spring and the vertical axis. For k-link pendulum, it considers $K$ balls connected end to end with $K$ rods under the effect of gravity \cite{lopes2017dynamics}, and its ODE is 
\begin{equation}
    \text{d}(\mathbf{\theta},\dot{\mathbf{\theta}})/\text{d}t =  (\dot{\mathbf{\theta}},\mathbf{A^{-1}}\mathbf{b})
\end{equation}
where $\mathbf{\theta}= (\theta_1,\theta_2,\cdots,\theta_K)$ and $\theta_i$ is the angle between the $i^{\rm th}$ rod and the vertical axis. Let $\mathbf{b}=(b_1,b_2,\cdots,b_K)$ and $b_i=-\sum_{j=1}^{K}\left[c(i, j) \dot{\theta}_{j}^{2} \sin \left(\theta_{i}-\theta_{j}\right)\right]-(K-i+1) g\sin \theta_{i}$.
 $\mathbf{A}$ is a $K \times K$ matrix and the element in $\mathbf{A}_{i,j}$ is $[K - \max(i,j)+1] \cos \left(\theta_{i}-\theta_{j}\right)$.
The simulation results are shown in Fig.~\ref{fig:2}. From Fig.~\ref{fig:2}, we can observe that even for chaotic systems that are more difficult to solve, the proposed AttNS still maintains a better solution accuracy under limited data scenarios and outperforms other AHS.

\begin{figure*}
\centering
\includegraphics[width=0.99\linewidth]{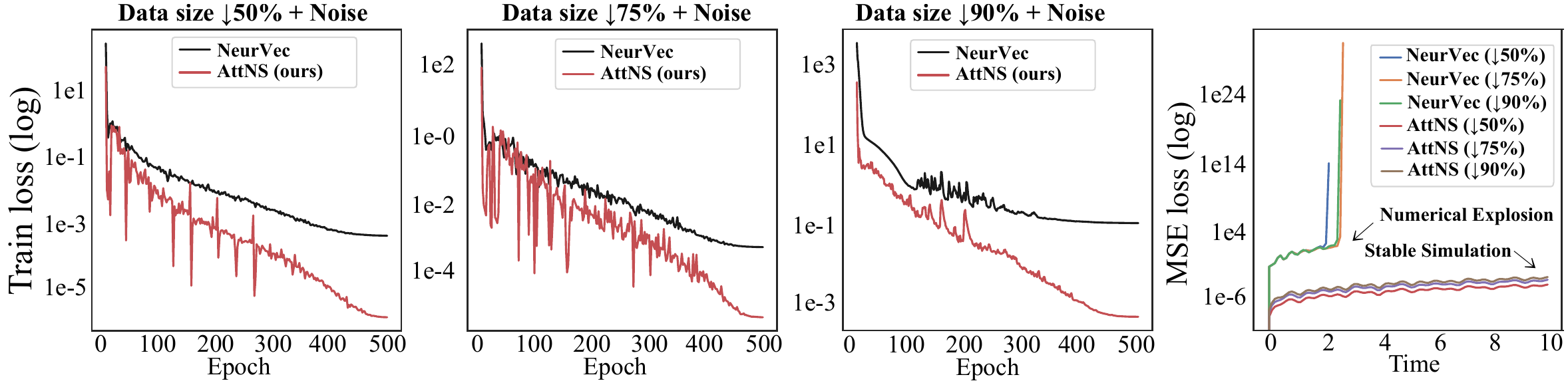}
\caption{The noise attack experiments for the elastic pendulum under different data sizes. AttNS, with the attention mechanism, can better mitigate the adverse effects of noise than other AHS. }
\label{fig:noiseattack}
\end{figure*}
\begin{table*}
  \centering
  \caption{The impact of the depth $h$ and width $d_1$ of the attention module on the simulation performance. Rel. Infe. Speed is the relative inference speed of the network based on our proposed AttNS setting, i.e., $h=2$ and $d_1 = 1024$.}
  \resizebox{0.99\hsize}{!}{
    \begin{tabular}{lllllll}
    \toprule
          & \multicolumn{2}{c}{\textbf{Spring-mass}} & \multicolumn{2}{c}{\textbf{k-link pendulum}} & \multicolumn{2}{c}{\textbf{elastic pendulum}} \\
\cmidrule{2-7}          & \multicolumn{1}{c}{\textbf{MSE loss}} & \multicolumn{1}{c}{\textbf{Rel. Infe. Speed}} & \multicolumn{1}{c}{\textbf{MSE loss}} & \multicolumn{1}{c}{\textbf{Rel. Infe. Speed}} & \multicolumn{1}{c}{\textbf{MSE loss}} & \multicolumn{1}{c}{\textbf{Rel. Infe. Speed}} \\
    \midrule
    \rowcolor{Gray} $h=2$ (ours)   &  2.58e-6     &  -      &  4.26e-9     &   -    &   5.41e-7    & - \\
    $h=3$     &  2.49e-6 ({\color{black}{$\uparrow$ 3.61\%}})    & {\color{black}{$\downarrow$ 88.49\%}}      & 4.39e-9 ({\color{black}{$\downarrow$ 2.52\%}})      & {\color{black}{$\downarrow$ 91.60\%}}      & 5.28e-7 ({\color{black}{$\uparrow$ 2.46\%}})      & {\color{black}{$\downarrow$ 90.89\%}} \\
    $h=4$     &  2.55e-6 ({\color{black}{$\uparrow$ 1.18\%}})    & {\color{black}{$\downarrow$ 93.95\%}}      & 4.42e-9 ({\color{black}{$\downarrow$ 3.62\%}})      & {\color{black}{$\downarrow$ 95.64\%}}      & 5.58e-7 ({\color{black}{$\downarrow$ 3.05\%}})      & {\color{black}{$\downarrow$ 95.33\%}} \\
    \midrule
    $d_1 = 512$   & 3.76e-5 ({\color{black}{$\downarrow$ 83.36\%}})      & {\color{black}{$\uparrow$ 27.59\%}}      & 2.56e-8 ({\color{black}{$\downarrow$ 93.14\%}})      & {\color{black}{$\uparrow$ 15.36\%}}      & 2.56e-6 ({\color{black}{$\downarrow$ 78.87\%}})      & {\color{black}{$\uparrow$ 17.25\%}} \\
    \rowcolor{Gray} $d_1=1024$ (ours) & 2.58e-6       &  -     &  4.26e-9     &   -    & 5.41e-7      & - \\
    $d_1=2048$  & 2.41e-6 ({\color{black}{$\uparrow$ 7.05\%}})       & {\color{black}{$\downarrow$ 26.48\%}}      & 3.98e-9 ({\color{black}{$\uparrow$ 7.03\%}})      & {\color{black}{$\downarrow$ 23.12\%}}      & 4.83e-7 ({\color{black}{$\uparrow$ 12.01\%}})      &  {\color{black}{$\downarrow$ 24.63\%}} \\
    \bottomrule
    \end{tabular}%
    }
  \label{tab:depth-width}%
\end{table*}%

\section{Discussion}
\label{sec:dis}

(1) \ul{\textbf{The convergence analysis of AttNS}}. Our AttNS is a numerical solver, so we need to estimate its numerical convergence. 
Now we take our AttNS with step size $k\Delta t$ and the Euler method as an example to provide the convergence analysis for the proposed AttNS in Theorem \ref{theo:con}.

\begin{theorem}\label{theo:con}
We consider ODE
$\text{d}\mathbf{u}/\text{d}t = \mathbf{f}(\mathbf{u}), \mathbf{u}(0) = \mathbf{c}_0$ and Euler method $\mathbf{u}_{n+1} = \mathbf{u}_{n} + \Delta t \mathbf{f}(\mathbf{u}_{n})$ . We assume that: (1) $\mathbf{f}$ is Lipschitz continuous with Lipschitz constant $L$, and (2) the second derivative of the true solution $\mathbf{u}$ is uniformly bounded by $M>0$, i.e., $\|\mathbf{u}''\|_\infty\leq M$ on $[0,T]$. Moreover, we assume that the attention module $Q[\cdot]$ in AttNS is Lipschitz continuous with Lipschitz constant $k\Delta t L_{\text{att}}$. For the solution of AttNS $\hat{\mathbf{u}}$ with step size $k\Delta t$, we have
\begin{equation}
    |\hat{\mathbf{u}}_N - \mathbf{u}(T)| \leq \alpha\Delta t + \beta \sqrt{\delta},
\end{equation}
where $\alpha = \frac{1}{2L}M\exp(2TL)$, $\beta = \frac{\sqrt{T}\exp(TL(1+L_{\text{att}}))}{\sqrt{L(1+L_{\text{att}})}}$ and $\delta$ is a error term about the training loss $R_e$. If the AttSlover can fit the training data well, i.e., $R_e \to 0$, the error $\delta \to 0$.
\end{theorem}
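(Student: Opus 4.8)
The plan is to run a classical discrete Gr\"onwall argument (one-step consistency plus error propagation), adapted so that the self-attention term contributes a separate fitting-error term. Throughout I write $t_n = n k\Delta t$ and $e_n = |\hat{\mathbf{u}}_n - \mathbf{u}(t_n)|$, and note that the initialization $\hat{\mathbf{u}}_0 = \mathbf{c}_0 = \mathbf{u}(0)$ forces $e_0 = 0$, so no initial-error term appears. Since the theorem is stated for the Euler scheme, the AttSolver update of Eq.~(\ref{eq:ours}) reads $\hat{\mathbf{u}}_{n+1} = \hat{\mathbf{u}}_n + k\Delta t\,\mathbf{f}(\hat{\mathbf{u}}_n) + Q[\mathbf{f}(\hat{\mathbf{u}}_n)\mid\phi]$; I will not need Theorem~\ref{theo:attention} for this estimate.

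First I would carry out the one-step (local) analysis. Taylor expansion of the exact solution gives $\mathbf{u}(t_{n+1}) = \mathbf{u}(t_n) + k\Delta t\,\mathbf{f}(\mathbf{u}(t_n)) + \tfrac{(k\Delta t)^2}{2}\mathbf{u}''(\xi_n)$ for some $\xi_n$, where $|\mathbf{u}''(\xi_n)| \le M$ by assumption~(2). Subtracting one AttSolver step started from the exact point $\mathbf{u}(t_n)$ defines the local residual
\[
\rho_n = \tfrac{(k\Delta t)^2}{2}\mathbf{u}''(\xi_n) - Q[\mathbf{f}(\mathbf{u}(t_n))\mid\phi],
\]
which cleanly separates the per-step error into an $O((k\Delta t)^2)$ truncation part (controlled by $M$) and an attention-mismatch part. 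Next I would obtain the error-propagation factor: differencing the AttSolver update evaluated at $\hat{\mathbf{u}}_n$ and at $\mathbf{u}(t_n)$ and using the Lipschitz constant $L$ of $\mathbf{f}$ (assumption~(1)) together with the assumed Lipschitz constant $k\Delta t L_{\text{att}}$ of the attention module, so that the composition $Q\circ\mathbf{f}$ is Lipschitz with constant $k\Delta t L_{\text{att}}L$, yields
\[
e_{n+1} \le \bigl(1 + k\Delta t\,L(1+L_{\text{att}})\bigr)e_n + |\rho_n|.
\]
This is the key recursion, and its growth factor is exactly what generates the exponents in $\alpha$ and $\beta$.

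Then I would solve the linear recursion. Unrolling from $e_0 = 0$ and bounding $(1+x)\le e^x$ over the $N = T/(k\Delta t)$ steps gives $e_N \le \sum_{n} (1+k\Delta t L(1+L_{\text{att}}))^{N-1-n}|\rho_n|$, each weight being at most $\exp(TL(1+L_{\text{att}}))$. Splitting $|\rho_n|$ along its two parts, the truncation contributions sum (a geometric series times $\tfrac{(k\Delta t)^2 M}{2}$) to the classical Euler bound of order $\Delta t$, which after crude bounding of the remaining constants produces the $\alpha\Delta t$ term with $\alpha = \tfrac{M}{2L}\exp(2TL)$. For the attention contributions I would apply Cauchy--Schwarz to the weighted sum, separating a geometric factor (which supplies the $\exp(TL(1+L_{\text{att}}))$, the $1/\sqrt{L(1+L_{\text{att}})}$, and the $\sqrt{T}$ pieces of $\beta$) from the $\ell^2$ norm of the attention residuals, which I would package as $\sqrt{\delta}$. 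This yields the $\beta\sqrt{\delta}$ term and closes the estimate.

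The main obstacle, and the only genuinely nonstandard step, is identifying $\delta$ with the training objective $R_e$ of Eq.~(\ref{eq:loss}). The residuals above are \emph{teacher-forced} one-step mismatches evaluated along the exact trajectory, whereas $R_e = c_n\tfrac1N\|\hat{\mathbf{u}}-\mathbf{u}\|_2^2$ measures the \emph{free-running} trajectory error $\tfrac1N\sum_n e_n^2$; the two agree only in the regime where the iterates track the data. The cleanest route is to define $\delta$ directly as the normalized sum of squared one-step attention residuals and then argue that it is controlled by, and vanishes together with, $R_e$ --- which is exactly the qualitative claim the theorem makes ($R_e\to0 \Rightarrow \delta\to0$). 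I would make this dependence explicit, and I would flag that the stated $\alpha=\tfrac{M}{2L}\exp(2TL)$ arises from bounding $1+L_{\text{att}}$ and the geometric-sum constants generously, so the closed forms for $\alpha$ and $\beta$ should be read as upper estimates rather than sharp constants.
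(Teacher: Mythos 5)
There is a genuine gap, and it sits exactly where your argument departs from the paper's: your analysis never brings the fine step size $\Delta t$ into play, so it cannot produce the term $\alpha\Delta t$. You compare the AttSolver iterates directly to the true solution and then split the one-step residual $\rho_n$ into the coarse truncation part $\tfrac{(k\Delta t)^2}{2}\mathbf{u}''(\xi_n)$ and the attention output $Q[\mathbf{f}(\mathbf{u}(t_n))\mid\phi]$, bounding the two separately. With your geometric weights, the truncation contributions sum to at most $\tfrac{(k\Delta t)^2M}{2}\cdot\tfrac{\exp(TL(1+L_{\text{att}}))}{k\Delta t\,L(1+L_{\text{att}})}$, i.e.\ a quantity of order $k\Delta t$, not $\Delta t$; the claim that this recovers ``the classical Euler bound of order $\Delta t$'' is off by the factor $k$, which is precisely the speed-up the theorem is meant to certify, so the loss is not cosmetic. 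Symmetrically, once the attention output is severed from the truncation term it is no longer a residual: as training succeeds, $Q$ converges to the compensation it is supposed to supply (roughly the coarse truncation error itself), which is nonzero, so a $\delta$ built from $\tfrac1N\sum_n|Q[\mathbf{f}(\mathbf{u}(t_n))\mid\phi]|^2$ does \emph{not} tend to $0$ as $R_e\to0$, and the theorem's final claim fails under that definition. Both defects are one defect: the cancellation between truncation and attention is the entire content of the method, and your decomposition throws it away.

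The paper's proof avoids this by inserting an intermediate comparator, the fine-step Euler trajectory $\mathbf{u}_{kn}$ (the object the training data represents), via the triangle inequality $|\hat{\mathbf{u}}_N-\mathbf{u}(T)|\le|\mathbf{u}_{Nk}-\mathbf{u}(T)|+|\hat{\mathbf{u}}_N-\mathbf{u}_{Nk}|$. The first term is classical fine-step Euler convergence (Lemma~\ref{lemma:1}) and is the sole source of $\alpha\Delta t$; the second term is handled by exactly the machinery you describe --- the propagation factor $1+k\Delta t\,L(1+L_{\text{att}})$, $E_0=0$, $(1+x)\le e^x$, and Cauchy--Schwarz --- but with the residual $V_n$ of Eq.~(\ref{eq:vn}) defined against the fine trajectory rather than the true solution, so that $V_n$ is a genuine training residual that vanishes when AttSolver reproduces the data, which is what $R_e\to0$ means. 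Your route can be repaired only by re-introducing this comparator inside $\rho_n$ (splitting $\rho_n$ into the fine trajectory's one-coarse-step defect versus the truth, which sums to $O(\Delta t)$, plus the training residual), but that is the paper's decomposition in disguise. Your closing caveat about teacher-forced versus free-running residuals is a fair observation --- it applies equally to the paper's own treatment of the link between $\delta$ and $R_e$ --- but it does not rescue the decomposition above.
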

\begin{proof}
	(See Appendix \ref{appendix:theo2}).\qedhere
\end{proof}
In fact, the conditions of Theorem \ref{theo:con} are mild. First, the Lipschitz condition of the attention module $Q$ is provided by the analysis of Eq.~(\ref{eq:q}) in Section \ref{sec:dis}(2), where we prove that it can be readily satisfied. Second, the boundedness of the true solution $\mathbf{u}$ can be directly verified for multiple physical systems (see Appendix). Theorem \ref{theo:con} can also be used to analyze the convergence of AttNS. For instance, some previous works \cite{du2018gradient,jacot2018neural} reveal that under certain mild conditions, gradient descent can allow a neural network to converge to a globally optimal solution, where the loss $R_e$ in Eq.~(\ref{eq:loss}) tends to 0. Then according to Theorem \ref{theo:con}, we have $\delta \to 0$,
\begin{equation}
    |\hat{\mathbf{u}} _{N}-\mathbf{u}(T)| \leq  \underbrace{ |\mathbf{u} _{Nk}-\mathbf{u}(T)|} _{\leq \Delta t \cdot M\cdot \exp(2TL)/2L}    + \underbrace{|\hat{\mathbf{u}} _{N}-\mathbf{u} _{Nk}|} _{\leq \beta\sqrt{\delta}\to 0}.
\end{equation}
and thus 
$|\hat{\mathbf{u}}_N - \mathbf{u}(T)| = \mathcal{O}(\Delta t)$. 
In this case, the AttNS with step size $k\Delta t$ can achieve the same accuracy as the Euler method with the step size $\Delta t$, whose global truncation error is also $\mathcal{O}(\Delta t)$ \cite{butcher2016numerical}. However, the evaluation speed of AttNS is approximately $\mathcal{O}(k)$ times greater than that of the Euler method in this situation. 

(2) \ul{\textbf{The network architecture of $Q[\hat{S}|\phi]$}}.
We design the attention module structure $Q[\hat{S}|\phi]$ in Eq.~(\ref{eq:q}) primarily because, on the one hand, it has a smaller number of parameters and enables faster inference speed, which ensures the computational efficiency. For example, in Eq.~(\ref{eq:aihybrid}), if data $\mathbb{D}_f$ is generated by a fine step size $\Delta t_f$, then theoretically its evaluation speed is 
    $\mathcal{O}(\Delta t_c / [(1+\epsilon)\Delta t_f]$
 times \cite{huang2022accelerating} that of the classical method Eq.~(\ref{eq:forward}), where $\epsilon > 0$ is related to the inference speed of the neural network. Therefore, the inference speed of the neural network in AHS is important.

On the other hand, it can be proven in Theorem \ref{theo:lip} that this architecture possesses the desirable Lipschitz continuity property, which provides a strong guarantee for the convergence analysis of AttNS stated in Theorem \ref{theo:con}. We will compare this attention module structure with other variants in Section \ref{sec:abl}.
\begin{theorem}\label{theo:lip}
 For $x\in$
 definition domain $A$
, the attention module $Q(x) = \mathbf{W_2}\circ \mathbf{a} \circ \mathbf{W_1} x$
, is Lipschitz continuous. $\mathbf{a}$
 is rational activation function, i.e. $\mathbf{a}(x) = \frac{\sum_{i=1}^3a_ix^i}{\sum_{i=1}^2b_ix^i}$
, where $a_i,b_i \in \mathbb{R}$
.
\end{theorem}
\begin{proof}
	(See Appendix \ref{app:lip}).\qedhere
\end{proof}
\begin{table*}
  \centering
  \caption{(Left) The impact of skip connection and considering $\Delta t_c$ as a part of the input. (Right) The performance of other type network architectures for $Q[\hat{S}|\phi]$.}
  \vspace{-0.1cm}
  \resizebox{1.0\hsize}{!}{
    \begin{tabular}{cllc||rccc}
    \toprule
    \multicolumn{1}{l}{\textbf{Benchmark}} & \multicolumn{1}{c}{\textbf{w/ }skip} & \multicolumn{1}{c}{\textbf{w/o } $\Delta t_c$} & \multicolumn{1}{c||}{Ours} &       & \textbf{Module} & \multicolumn{1}{l}{\textbf{MSE (elastic)}} & \multicolumn{1}{l}{\textbf{MSE ($k$-link)}} \\
    \midrule
          Spring-mass &   4.57e-4 ({\color{black}{$\downarrow$ 99.44\%}})   & 3.67e-6 ({\color{black}{$\downarrow$ 29.78\%}})     &    \textbf{2.58e-6 }    &       & LSTM \cite{Hochreiter1997LongSM}  & 7.21e-06 & 1.35e-08 \\
          $K$-link &  6.36e-6 ({\color{black}{$\downarrow$ 99.93\%}})    & \textbf{4.18e-9} ({\color{black}{$\uparrow$ 1.91\%}})     &  4.26e-9       &       & Transformer \cite{liu2021swin} & 3.64e-04 & 2.65e-06 \\
          Elastic &  1.39e-4 ({\color{black}{$\downarrow$ 99.61\%}})    & 1.65e-6  ({\color{black}{$\downarrow$ 67.21\%}})    &   \textbf{5.41e-7}       &       & Ours  & \textbf{5.41e-07} & \textbf{4.26e-09} \\
    \bottomrule
    \end{tabular}%
    }
  \label{tab:wwo}
\end{table*}%

(3) \ul{\textbf{The the effectiveness of AttNS for generalization }}.
Now we extend our discussion of the data size, where some analyses have been given by Theorem \ref{theo:number} under the Euler method and some mild assumptions.
Theorem \ref{theo:number} is based on the view of Vapnik-Chervonenkis theory \cite{vapnik1999nature}.  Compared to the general AHS, AttNS requires a smaller data size to achieve the same generalization error, which is consistent with the experimental results in Fig.~\ref{fig:1}.

\begin{theorem}\label{theo:number} Let  $\text{Net}(\hat{S}|\mathbf{\phi},\mathbb{D}_f)$  and  $\text{Net}(\mathbf{u}_{n}|\mathbf{\phi},\mathbb{D}_f)$, where $\hat{S} = S(\mathbf{f}, \mathbf{u}_{n}, \Delta t_c)$, be the correction term of general AHS and AttNS, respectively. For $\epsilon>0$, when the data size is more than $N^\prime$, the empirical error of two methods satisfy $R_e(\phi|\text{AHS})\leq\epsilon$ and $R_e(\phi|\text{AttNS})\leq\epsilon$. For small enough $\epsilon_0 \ll \epsilon$ and Euler method, we have 
\begin{equation}
N(\text{AttNS})  \lesssim N(\text{AHS}),
    \label{eq:number}
\end{equation}
where $N(*)$ is the lower bound of the data size that the generalization error of method $*$ can reach $\epsilon(1-\epsilon_0)^{-1}$.
\end{theorem}

\begin{proof}
	(See Appendix \ref{appendix:theo3}).\qedhere
\end{proof}
Although our proposed method does reduce the data size required by the SOTA method, the accuracy may not be sufficient for all the problems in the natural sciences and engineering. Therefore, if we need to solve a problem that requires high accuracy, it would be better to increase the training data size as much as possible to further improve the performance of the AI-enhance solver than only relying on the algorithmic design. This is because we can observe when the complete training data is used, the improvement brought by AttNS may be less than 10\%-15\%, which is reasonable since the AttNS is designed tailored to the data-insufficient scenario and we have not explicitly maximized the performance of AttNS when the data is sufficient. In the future, we will explore improving the AttNS in the data-sufficient scenario.

(4) \ul{\textbf{The the effectiveness of AttNS for robustness }}.
 Our proposed method AttNS is inspired by the attention mechanism in ResNet. In Fig.~\ref{fig:noiseattack}, we empirically explore the robustness of AttNS by noise attack experiments under different data sizes. Injecting noise into the input is an explicit way of adding small perturbations that help us observe the robustness of the model. Specifically, in the training process, we can interfere with the training phase of AttNS by adding constant noise $\sigma = 1e-5$ to $\hat{\mathbf{u}}_n$, i.e., $\hat{\mathbf{u}}_n \gets \hat{\mathbf{u}}_n +\sigma$ for all $n$ in Eq.~(\ref{eq:ours}). Experimental results show that, compared with the baseline AHS, i.e., NeurVec, our proposed method can significantly regulate the noise, leading to smaller training loss and stable simulations. In contrast, NeurVec explodes numerically at the beginning of the simulation due to the effect of noise in all settings. 

The above experiments illustrate that the proposed AttNS is robust enough to maintain a stable solution even in limited data scenarios.

(5) \ul{\textbf{Limitation }}. From the experiment results in our paper, the proposed AttNS can achieve good enough simulation performance with less training data. However, AttNS does not completely prevent the solution from being disturbed under the inherent noise in the dataset. So we aim to alleviate the noise issue rather than solve it completely.
If we want to further mitigate this issue, we may need more elaborate network structures and training settings to enhance the effectiveness of our AttNS.

The experiments in this paper only consider the ordinary differential equation (ODE) and not the benchmark for considering partial differential equations (PDE), this is because the ODE is more conducive to some of the theoretical analyses in this paper, whereas the complex PDE is not easy to analyze. Fortunately, the AHS can also be used for PDE solv~\cite{dresdner2022learning}, so this could be used as future work to use AttNS for PDE solving.

\section{Ablation study}
\label{sec:abl}


In this section, we perform several ablation studies on AttNS. In Table \ref{tab:depth-width}, we analyze the impact of the depth $h$ and width $d_1$ for the attention module in Eq.~(\ref{eq:q}).
We observe that the depth $h$ has little effect on the MSE loss in all benchmarks, but the model's performance is positively correlated with the width $d_1$. Therefore, to increase the inference speed, we choose a sufficiently shallow depth $h=2$ and appropriate width $d_1 = 1024$ for AttNS.

Then we study popular residual structures for AttNS. From Table \ref{tab:wwo} (Left), the skip connection will bring significant negative impacts on simulation for both high-dimension linear systems and chaotic dynamical systems. This motivates us to adopt a simple stacking structure like Eq.~(\ref{eq:q}) for our AttNS. 
Moreover, we analyze the form of the input of the attention module, where we take $\hat{S}$ as input instead of the complete integration term $\hat{S}\Delta t_c$ in Eq.~(\ref{eq:q}). 

In Table \ref{tab:wwo}, we show that the input $\hat{S}$ has better performance than input $\hat{S}\Delta t_c$, except for k-link pendulum (similar performance). Furthermore, Table \ref{tab:wwo} (Right) demonstrates the performance of other network architectures for $Q[\hat{S}|\phi]$, highlighting the effectiveness of the attention module structure as defined in Eq. (\ref{eq:q}).

\section{Conclusion}
This paper discusses how to improve AHS for effective computation even in limited data scenarios. Using the dynamical system view of ResNet, we introduce the attention mechanism into the numerical solver and propose AttNS to help the generalization and robustness issue when the data is limited. Experimental results show the effectiveness of AttNS in improving various numerical solvers, where we also analyze the convergence, generalization, and robustness.

\section*{Acknowledgement}
This work was supported by National Science and Technology Major Project (No.2021ZD0111601), National Natural Science Foundation of China (No.62325605, No.62272494), and Guangdong Basic and Applied Basic Research Foundation (No.2023A1515011374), and Guangzhou Science and Technology Program (No.2024A04J6365).

\section*{Impact Statement}
The work presented in this paper aims to advance the field of scientific machine learning. Our work has many potential impacts in fields including physics, chemistry.



\nocite{langley00}

\bibliography{example_paper}
\bibliographystyle{icml2022}


\appendix
\onecolumn
\newpage
\section{The details of the proposed algorithm.}
\label{appendix:details}

\begin{table}[h]
  \centering
  \caption{\label{tab:data1}Summary of the datasets mentioned in this paper. }
  \resizebox{0.99\hsize}{!}{
    \begin{tabular}{llccccc}
    \toprule
    \multicolumn{1}{c}{Benchmark} & \multicolumn{1}{c}{Type} & Dimension & Data size & Step size  & Generative Method & Evaluation time $T$\\
    \midrule
    Spring-mass  & Train & *    & 5k   & 1e-3  & * & 20 \\
    Spring-mass  & Validation & *    & 0.1k   & 1e-5  & RK4 & 20 \\
    Spring-mass & Test & *    & 5k   & 1e-5  & RK4   & 20 \\
    \midrule
    2-link pendulum & Train & 4     &  20k     & 1e-3  & RK4   &  10\\
    2-link pendulum & Validation & 4     & 1k  & 1e-5  & RK4   & 10 \\
    2-link pendulum & Test  & 4     & 10k    & 1e-5  & RK4   & 10 \\
    \midrule
    Elastic pendulum & Train & 4     & 20k  & 1e-3  & RK4   & 50 \\
    Elastic pendulum & Validation & 4     & 1k  & 1e-5  & RK4   & 50 \\
    Elastic pendulum & Test  & 4     & 10k    & 1e-5  & RK4   & 50 \\
    \bottomrule
    \end{tabular}%
    }
\end{table}%

\noindent\textbf{The details of dataset}.
We summarize the information on training, validation, and test data for all benchmarks in this paper in Table~\ref{tab:data1}. We obtain the discrete solutions every step size up to the model time $T$ and the evaluation time for all experimental results of MSE loss in this paper is $T$. The Generative method of Spring-mass during training depends on the numerical solver used in Table 1 in main text. Moreover, the dimension of the Spring-mass system also depends on $d$ in Table 1 in main text, and if $d = 20$, the dimension is $2d = 40$. ``RK4" denotes 4th order Runge-Kutta method.


\begin{table}[h]
  \centering
  \caption{\label{tab:init}The initialization of different benchmarks. ``Uniform random'' means that the variables are sampled with uniform distribution of a given range. ``Constant'' means the variable is initialized as a constant. The dimension of $\mathbf{p}$ and $\mathbf{q}$ in Spring-mass system depends on $d$ in Table 1 in main text, and if $d = 20$, their dimension are $d = 20$.}
  \resizebox{0.99\hsize}{!}{%
    \begin{tabular}{lccccc}
    \toprule
    \multicolumn{1}{c}{\textbf{Task}} & \textbf{Variable} & \textbf{Dimension} & \textbf{Type} & \textbf{Range of initialization} & \textbf{Model input?}\\
    \midrule
    Spring-mass & $\mathbf{p}$  & *    & Uniform random & $[-2.5,2.5]^{20}$ & \checkmark\\
    Spring-mass & $\mathbf{q}$  & *    & Uniform random & $[-2.5,2.5]^{20}$& \checkmark \\
    \midrule
    Elastic pendulum & $\theta$ & 1     & Uniform random & $[0,\pi/8]$& \checkmark \\
    Elastic pendulum & $r$     & 1     & Constant & 10 & \checkmark\\
    Elastic pendulum & $\dot{\theta}$ & 1     & Constant & 0 & \checkmark\\
    Elastic pendulum & $\dot{r}$  & 1     & Constant & 0 & \checkmark\\
    Elastic pendulum & $l_0$    & 1     & Constant & 10 & \\
    Elastic pendulum & $g$     & 1     & Constant & 9.8&  \\
    Elastic pendulum & $k$    & 1     & Constant &  40& \\
    Elastic pendulum & $m$     & 1     & Constant &  1& \\
    \midrule
    2-link pendulum & $\mathbf{\theta}$ & 2     & Uniform random & $[0,\pi/8]^2$ & \checkmark\\
    2-link pendulum & $\mathbf{\dot{\theta}}$ & 2     & Constant & 0& \checkmark \\
    2-link pendulum & $m$     & 1     & Constant & 1 & \\
    2-link pendulum & $g$     & 1     & Constant & 9.8 & \\
    \bottomrule
    \end{tabular}%
    }
\end{table}%

 \noindent\textbf{Numerical solvers}. In this paper, we consider four forward numerical solvers to validate the effectiveness of our proposed AttNS, including the Euler method, Improved Euler method, 3rd and 4th order Runge-Kutta methods. In this section, we introduce these solvers. As mentioned in Eq. (2) in main text, these solvers have different integration terms $S(f, u_{n}, \Delta t)$.

\begin{table}[htbp]
  \centering
  \caption{The integration terms for different kinds of numerical solvers with step size $\Delta t$.}
    \begin{tabular}{rll}
    \toprule
    \multicolumn{1}{l}{Numerical solver} & \multicolumn{1}{l}{Integration term $S(f, u_{n}, \Delta t)$} & \multicolumn{1}{l}{Global truncation error} \\
    \midrule
    \multicolumn{1}{l}{Euler} &  $\Delta t \mathbf{f}(\mathbf{u}_{n})$ &$\mathcal{O}(\Delta t)$\\
    \multicolumn{1}{l}{Improved Euler} & $\frac{\Delta t }{2}[ \mathbf{f}(\mathbf{u}_{n}) + \mathbf{f}(\mathbf{u}_{n} + \Delta t \mathbf{f}(\mathbf{u}_{n}))]$  &$\mathcal{O}(\Delta t^2)$\\
    \multicolumn{1}{l}{3rd order Runge-Kutta}     & $\Delta t (\lambda_1K_1 + \lambda_2K_2 + \lambda_3K_3) $ &$\mathcal{O}(\Delta t^3)$\\
    \multicolumn{1}{l}{4th order Runge-Kutta}     & $\Delta t (\beta_1J_1 + \beta_2J_2 + \beta_3J_3 +\beta_4J_4) $ &$\mathcal{O}(\Delta t^4)$ \\
    \bottomrule
    \end{tabular}%
  \label{tab:dsfsdf}%
\end{table}%
For 3rd order Runge-Kutta method ,the coefficients $\lambda_1 = \lambda_3 = \frac{1}{6}$ and $\lambda_2 = \frac{2}{3}$. Besides, $K_1 = \mathbf{f}(\mathbf{u}_{n}), K_2 = \mathbf{f}(\mathbf{u}_{n}+\frac{\Delta t}{2}K_1)$ and $K_3 = \mathbf{f}(\mathbf{u}_{n} - \Delta t K_1+2\Delta tK_2)$. For the 4th order  Runge-Kutta method, $\beta_1 = \beta_4 = \frac{1}{6}$ and $\beta_2 = \beta_3 = \frac{1}{3}$. Moreover, $J_1 = \mathbf{f}(\mathbf{u}_{n}), J_2 = \mathbf{f}(\mathbf{u}_{n}+\frac{\Delta t}{2}J_1), J_3 = \mathbf{f}(\mathbf{u}_{n}+\frac{\Delta t}{2}J_2)$ and $J_4 = \mathbf{f}(\mathbf{u}_{n}+\Delta t J_3)$. Generally speaking, 4th order Runge-Kutta method has the smallest Global truncation error $\mathcal{O}(\Delta t^4)$, i.e., 4th order Runge-Kutta method has the highest accuracy. However, due to its most complex integration term $\Delta t (\beta_1J_1 + \beta_2J_2 + \beta_3J_3 +\beta_4J_4) $, its simulation speed is the slowest among the four numerical solvers in Table \ref{tab:dsfsdf}.

\section{The motivation and challenge behind the proof of our Theorems}

\subsection{Preliminaries}

This work focuses on the numerical solvers for differential equations. The stability (or robustness) and convergence are two essential metrics for numerical solvers. Specifically, we introduce the definition of the stability and convergence for ODE solvers as follows \cite{ferziger2002computational,dahlquist1956convergence}:

\textbf{Stability} refers to the sensitivity of the solver to initial conditions, parameters, and round-off errors during computation. A stable solver can produce reliable results even if the input conditions vary slightly with ``noise". If a numerical method is unstable, it may produce unreasonable results or even lead to computation failure.

\textbf{Convergence} refers to the ability of numerical computation to approach the true solution as the step size ($\Delta t$) decreases. A convergent solver will produce increasingly accurate results, and the error will tend to zero as the step size approaches zero. If a solver does not converge, it will not produce accurate results even with a small step size.

The above preliminaries on stability and convergence highlight that these two metrics are distinct for numerical solvers of differential equations. That's why we separately discuss stability and convergence by Theorem 2.1 and Theorem 5.1, respectively. Specifically,

For stability, Theorem 2.1 investigates the ability of attention mechanisms to regulate noise and enhance model robustness, inspiring us to introduce attention mechanisms in numerical solvers.

For convergence, Theorem 5.1 examines the convergence of the proposed solver by analyzing approximation errors, and provides the convergence rate of the solver under well-trained neural networks.

Numerical solvers that have both good stability and convergence can obtain good solutions in different scenarios of differential equations. Moreover, since the AI-enhanced numerical solver is still exactly a numerical solver but only enhanced by a neural network, when we only have a small size of training data, the stability and convergence of such kind of solver may not be sufficiently guaranteed, and thus we must have a new kind of AI-enhanced numerical solver that can have better theoretical guarantee on the stability and convergence such that we may better offset the adverse effect on stability and convergence than other existing solvers when the size of the training data is small. Theorem 2.1 and Theorem 5.1 respectively reveal the potential for the proposed AttNS to become a sufficiently good AI-enhanced numerical solver from these two perspectives, and we also empirically show that our AttNS performs much better than other solvers when the size of training data is small. The further theoretical analysis can be the future work to enhance the existing AttNS.

\subsection{The technical novelty and difficulty in proving theorem}

We summarize the difficulty of proving Theorem 5.1 as follows:

1. Theorem 5.1 is used to analyze the convergence of a numerical solver integrated with attention mechanism. To the best of our knowledge, we are the first to introduce attention mechanisms into numerical solvers, and hence there are not many related proof techniques in the existing literature that can be used for analyzing the convergence of the numerical solver that incorporates attention modules.

2. In proving Theorem 5.1, our goal is to make the proof approach versatile and generalizable, i.e., we want to design a unified approach that can be used to prove Theorem 5.1 for many attention modules $Q$
 that satisfy certain conditions. This requirement naturally adds difficulty on constructing our proof, but will provide us more general theoretical evidence for improving attention modules $Q$
 in the future.

3. To bridge the gap between theory and practice, it is challenging to set reasonable and mild conditions that can make Theorem 5.1 applicable for analyzing the convergence of solvers with specific empirical designs, such as our AttNS.

4. Since the solver we considered is AI-enhanced numerical, we must non-trivially connect the concepts of the numerical solver and the neural network to prove the convergence. However, the concept of the numerical solver and neural network both separately involve many different factors and perspectives in their own theoretical analyses, and the combination of these two concepts naturally introduces more factors, which greatly interferes with our target that we want to prove in Theorem 5.1. Therefore, we need to design a proof approach that can disentangle and eliminate interference factors that are irrelevant to our proof goal, enabling us to provide a clean proof that directly addresses our proof target for Theorem 5.1.

We also summarize the novelty of proving Theorem 5.1 as follows:

1. \textbf{Disentanglement.} As discussed in ``Difficulty," in order to eliminate the complex interference factors in the theoretical analysis of numerical solvers and neural networks, we skillfully disentangle the convergence of AttNS $|\hat{\mathbf{u}} _{Nk}-\mathbf{u}(T)|$
 into the convergence of the original numerical method $|\mathbf{u} _{Nk}-\mathbf{u}(T)|$
 and the performance of the attention module $|\hat{\mathbf{u}} _{N}-\mathbf{u} _{Nk}|$
 using the triangle inequality in the proof, i.e.,
\begin{equation}
    |\hat{\mathbf{u}} _{Nk}-\mathbf{u}(T)| \leq   \underbrace{|\mathbf{u} _{Nk}-\mathbf{u}(T)|} _{\leq \Delta t \cdot M \cdot \exp (2 T L) / 2 L}+ \underbrace{|\hat{\mathbf{u}} _{N}-\mathbf{u} _{Nk}|} _{\leq \beta \sqrt{\delta} \rightarrow 0},
\end{equation}
Since $|\mathbf{u} _{Nk}-\mathbf{u}(T)|$
 and module training are independent, traditional mathematical methods can be used to analyze $|\mathbf{u} _{Nk}-\mathbf{u}(T)|$
, while $|\hat{\mathbf{u}} _{N}-\mathbf{u} _{Nk}|$
 can be proven using relevant methods for neural network convergence. This disentanglement successfully bridges the convergence of AttNS and the training performance of the neural network, and may also inspire future theoretical proofs of introducing neural networks into differential equation solvers.

2. \textbf{Versatility}. As mentioned in "Difficulty," our proof should be applicable to many different attention modules. Specifically, we make our proof be versatile by considering the Lipschitz continuity of the attention module, which is a very general property that can be used to formally evaluate different and new kinds of attention module in the future. We also show in Lemma \ref{lemma:rebuttal1} that the MLP-alike attention module with a rational activation function in AttNS from Eq.(9) satisfies the Lipschitz continuity condition. The convergence of attention modules with Lipschitz conditions can be estimated uniformly using Theorem 5.1.

3. \textbf{Instructiveness.} The Lipschitz condition can guide the design of new attention modules in numerical solvers in the future and may be a necessary condition for the effectiveness of the solver. Specifically, attention modules that satisfy the Lipschitz condition, such as the "simple" MLP-alike structure with rational activation function shown in Eq.(9), can effectively solve differential equations under the proposed AttNS framework. However, some "complex" attention module structures, such as transformer and LSTM,may have poor prediction due to the fact that they generally lack the Lipschitz continuity condition, which is consistent with the experimental observation in the main text.

\newpage
\section{The proof of Theorem 5.1}
\label{appendix:theo2}
\textbf{Theorem 5.1}.
We consider ODE
$\text{d}\mathbf{u}/\text{d}t = \mathbf{f}(\mathbf{u}), \mathbf{u}(0) = \mathbf{c}_0$ and Euler method $\mathbf{u}_{n+1} = \mathbf{u}_{n} + \Delta t \mathbf{f}(\mathbf{u}_{n})$ . We assume that (1) $\mathbf{f}$ is Lipschitz continuous with Lipschitz constant $L$ and (2) the second derivative of the true solution $\mathbf{u}$ is uniformly bounded by $M>0$, i.e., $\|\mathbf{u}''\|_\infty\leq M$ on $[0,T]$. Moreover, we assume that the attention module $Q[\cdot]$ in AttNS is Lipschitz continuous with Lipschitz constant $k\Delta t L_{\text{att}}$. For the solution of AttNS $\hat{\mathbf{u}}$ with step size $k\Delta t$, we have
\begin{equation}
    |\hat{\mathbf{u}}_N - \mathbf{u}(T)| \leq \alpha\Delta t + \beta \sqrt{\delta},
\end{equation}
where $\alpha = \frac{1}{2L}M\exp(2TL)$, $\beta = \frac{\sqrt{T}\exp(TL(1+L_{\text{att}})}{\sqrt{L(1+L_{\text{att}})}}$ and $\delta$ is a error term about the training of AttSlover. If the AttSlover can fit the training data well, the error $\delta \to 0$.

\begin{lemma}
If the assumptions (1) and (2) in Theorem 5.1 hold, for Euler method $\mathbf{u}_{n+1} = \mathbf{u}_{n} + \Delta t \mathbf{f}(\mathbf{u}_{n})$, we have 
\begin{equation}
|\mathbf{u}_{Nk}-\mathbf{u}(T)|\leq \frac{M\exp(2TL)}{2L}\Delta t ,
\end{equation}
where $0=t_0<t_1<\cdots<t_{Nk}=T$ be uniform points on $[0, T]$ and $\Delta t=\frac{T}{Nk}$.
\label{lemma:1}
\end{lemma}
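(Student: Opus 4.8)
The plan is to follow the classical convergence argument for the explicit Euler method, decomposing the global error into a one-step (local truncation) contribution governed by assumption (2) and a propagation contribution governed by the Lipschitz hypothesis (1). Write $e_n := |\mathbf{u}_n - \mathbf{u}(t_n)|$ for the global error at the node $t_n$, and note $e_0 = 0$ because the scheme is started from the exact initial datum $\mathbf{u}(0)=\mathbf{c}_0$.

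First I would Taylor-expand the exact solution about $t_n$. Since $\mathbf{u}$ solves the ODE, $\mathbf{u}(t_{n+1}) = \mathbf{u}(t_n) + \Delta t\,\mathbf{f}(\mathbf{u}(t_n)) + \tfrac{1}{2}\Delta t^2\,\mathbf{u}''(\xi_n)$ for some $\xi_n\in(t_n,t_{n+1})$, and assumption (2) bounds the remainder uniformly by $\tfrac{1}{2}M\Delta t^2$. Subtracting the Euler update $\mathbf{u}_{n+1}=\mathbf{u}_n+\Delta t\,\mathbf{f}(\mathbf{u}_n)$ from this identity, then applying the triangle inequality together with the Lipschitz bound $|\mathbf{f}(\mathbf{u}_n)-\mathbf{f}(\mathbf{u}(t_n))|\le L e_n$, yields the affine scalar recurrence $e_{n+1}\le (1+L\Delta t)\,e_n + \tfrac{1}{2}M\Delta t^2$.

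Finally I would solve this recurrence with $e_0=0$, either directly as a geometric sum or via the discrete Gr\"onwall inequality, to obtain $e_n \le \tfrac{M\Delta t}{2L}\big((1+L\Delta t)^n-1\big)$. Using the elementary estimate $1+x\le e^x$ gives $(1+L\Delta t)^n\le \exp(nL\Delta t)$, and since the nodes satisfy $n\Delta t\le t_{Nk}=T$ the factor is controlled by $\exp(TL)$. Evaluating at $n=Nk$ then produces $|\mathbf{u}_{Nk}-\mathbf{u}(T)|\le \tfrac{M}{2L}\exp(TL)\,\Delta t$; since $\exp(TL)\le\exp(2TL)$, the sharper estimate I obtain already implies the stated bound with constant $\tfrac{M}{2L}\exp(2TL)$.

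I do not expect a serious obstacle here, as this is essentially the textbook global-error analysis for Euler; the only points requiring a little care are justifying that the Taylor remainder is bounded uniformly across all $Nk$ steps (which is exactly what the uniform bound $\|\mathbf{u}''\|_\infty\le M$ on $[0,T]$ in assumption (2) provides) and correctly solving the affine recurrence to extract the exponential factor. The constant-tracking in this last step is where the stated $\exp(2TL)$ appears, so that is the place I would be most careful, noting that the lemma deliberately states a loose constant that the sharper analysis subsumes.
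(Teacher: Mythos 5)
Your proof is correct, and it is worth noting that the paper itself never proves this lemma: in Appendix~\ref{appendix:theo2} it is stated without proof and then invoked inside the proof of Theorem~\ref{theo:con} as a known fact (it is the classical global-error bound for the explicit Euler method). Your argument --- Taylor expansion with remainder controlled by $\tfrac{1}{2}M\Delta t^2$ via assumption (2), the Lipschitz-driven recurrence $e_{n+1}\le(1+L\Delta t)e_n+\tfrac{1}{2}M\Delta t^2$, the geometric sum with $e_0=0$, and the bound $(1+x)^n\le e^{nx}$ --- is exactly the standard argument the paper implicitly relies on; the elementary estimate $(1+x)^n\le e^{nx}$ is even isolated by the paper as Lemma~\ref{lemma:2} for use elsewhere in the same appendix. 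Your constant $\tfrac{M}{2L}e^{TL}$ is sharper than the stated $\tfrac{M}{2L}e^{2TL}$, and as you observe the stated bound follows at once. The only step to tighten is the Taylor expansion for vector-valued $\mathbf{u}$: a single mean-value point $\xi_n$ need not work for all components simultaneously, so either argue componentwise or use the integral form of the remainder, $\int_{t_n}^{t_{n+1}}(t_{n+1}-s)\,\mathbf{u}''(s)\,\mathrm{d}s$, whose norm is still bounded by $\tfrac{1}{2}M\Delta t^2$ under assumption (2); nothing downstream changes.
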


\begin{lemma}
For any $n\in \mathbb{N}_+$ and $x>0$, we have 
\begin{equation}
(1+x)^n \leq \exp(nx) ,
\end{equation}
\label{lemma:2}
\end{lemma}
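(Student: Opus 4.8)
The plan is to reduce the multiplicative statement to the single-variable inequality $1 + x \le \exp(x)$ and then raise both sides to the $n$-th power. The first step is to establish the scalar inequality $1 + x \le e^x$ for $x > 0$. The cleanest route is a one-line calculus argument: set $g(x) = e^x - 1 - x$, observe that $g(0) = 0$, and note that $g'(x) = e^x - 1 > 0$ for all $x > 0$, so $g$ is strictly increasing on $(0,\infty)$ and hence $g(x) > g(0) = 0$, which gives $1 + x < e^x$. Alternatively, for $x > 0$ one may simply invoke the power series $e^x = \sum_{k \ge 0} x^k/k!$ and discard every term beyond $1 + x$, all of which are strictly positive; this yields the same bound directly.

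Next I would raise the scalar inequality to the $n$-th power. Since $x > 0$ we have $1 + x > 0$, and the map $t \mapsto t^n$ is monotonically increasing on $[0,\infty)$ for every $n \in \mathbb{N}_+$. Applying this monotonicity to the chain $0 < 1 + x \le e^x$ yields $(1+x)^n \le (e^x)^n = \exp(nx)$, which is exactly the assertion of the lemma.

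As an equivalent route, once the base inequality is in hand the conclusion also follows by induction on $n$: the case $n = 1$ is precisely $1 + x \le e^x$, and assuming $(1+x)^n \le e^{nx}$ one multiplies through by the positive factor $1 + x$ and invokes $1 + x \le e^x$ once more to obtain $(1+x)^{n+1} \le (1+x)\,e^{nx} \le e^x \cdot e^{nx} = e^{(n+1)x}$.

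There is essentially no substantive obstacle in this argument; it is an elementary inequality. The only two points that warrant any care are (i) proving the scalar inequality $1 + x \le e^x$, for which the monotonicity of $g(x) = e^x - 1 - x$ on $(0,\infty)$ suffices, and (ii) confirming that exponentiation preserves the direction of the inequality, which holds because both sides are nonnegative and $t \mapsto t^n$ is increasing on $[0,\infty)$.
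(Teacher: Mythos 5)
Your proof is correct, but it takes a different route from the paper's. You reduce everything to the scalar inequality $1+x \le e^x$ (proved via the monotonicity of $g(x)=e^x-1-x$ or the power series) and then lift it to exponent $n$ by monotonicity of $t \mapsto t^n$ on $[0,\infty)$, with induction as a backup; the paper instead works with the exponent $n$ directly inside a single series comparison, writing $\exp(nx) = \sum_{i\ge 0} n^i x^i / i!$ and bounding term by term via $n^i \ge n(n-1)\cdots(n-i+1)$, so that the tail collapses to the binomial expansion $\sum_{i\ge 0} \binom{n}{i} x^i = (1+x)^n$ (the factors $(n-j)$ vanish for $i>n$). Both arguments are elementary and complete. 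Your decomposition is more modular and in fact yields slightly more: since $1+x \le e^x$ holds for all real $x$, your argument extends verbatim to all $x \ge -1$, whereas the paper's term-by-term comparison genuinely requires $x>0$ so that all series terms are nonnegative. The paper's version buys a one-display proof with no auxiliary lemma, at the cost of a small combinatorial observation about falling factorials.
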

\begin{proof}
(For Lemma \ref{lemma:2}). According to the Taylor expansion, we have 
\begin{equation}
    \exp(nx) = \sum_{i=0}^\infty\frac{(nx)^i}{i!} \geq \sum_{i=0}^\infty n^i\frac{x^i}{i!} \geq \sum_{i=0}^\infty \prod_{j=0}^{i-1} (n-j)\frac{x^i}{i!} = \sum_{i=0}^\infty C_n^i x^i = (1+x)^n
\end{equation}
\end{proof}

\begin{proof}(For Theorem 5.1.)
\begin{figure*}[h]
\centering
\includegraphics[width=1.0\linewidth]{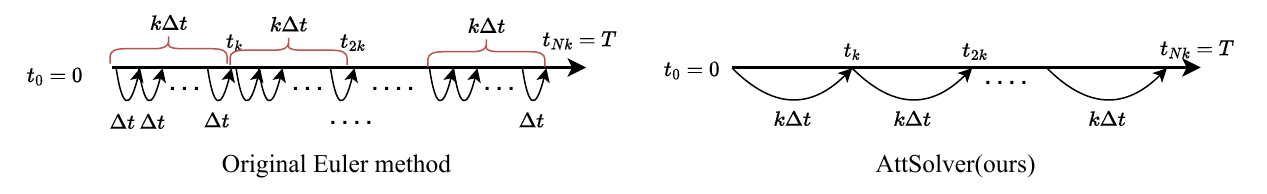}
\caption{The discretization of the original Euler method (Left) and our AttNS (Right).  }
\label{fig:discretization}
\end{figure*}
Let's consider the discretization for time interval $[0,T]$ as shown in Fig.\ref{fig:discretization}, and $E_n := \hat{\mathbf{u}}_{n} - \mathbf{u}_{nk}$
	\begin{align*}
	|\hat{\mathbf{u}}_{Nk}-\mathbf{u}(T)| &\leq |\mathbf{u}_{Nk}-\mathbf{u}(T)|+|\hat{\mathbf{u}}_{N}-\mathbf{u}_{Nk}|\tag*{Since $|a-b|\leq |a| + |b|$}\\
	&=\frac{M\exp(2TL)}{2L}\Delta t + E_N\tag*{Since Lemma \ref{lemma:1}}\\
	\end{align*}
Next we estimate the upper bound of $E_N$. For any $n\geq0$ we have 
	\begin{equation}
	\begin{aligned}
    \hat{\mathbf{u}}_{n+1}-\mathbf{u}_{k(n+1)}&=\hat{\mathbf{u}}_{n} + \mathbf{f}(\hat{\mathbf{u}}_{n}) (k\Delta t) + Q(\mathbf{f}(\hat{\mathbf{u}}_{n})|\phi)-\mathbf{u}_{k(n+1)}
    \\&=\hat{\mathbf{u}}_{n}-\mathbf{u}_{kn}+\big(\mathbf{f}(\hat{\mathbf{u}}_{n})-\mathbf{f}(\mathbf{u}_{kn})\big) (k\Delta t) + Q(\mathbf{f}(\hat{\mathbf{u}}_{n})|\phi)-Q(\mathbf{f}(\mathbf{u}_{kn})|\phi)-(k\Delta t)V_n.
	\end{aligned}
	\label{eq:temp111}
	\end{equation}	
where $V_n$ is an error term about training, i.e.,
\begin{equation}
     V_n = \frac{1}{k\Delta t}(\mathbf{u}_{k(n+1)}-\mathbf{u}_{kn} - \mathbf{f}(\mathbf{u}_{kn})k\Delta t- Q(\mathbf{f}(\mathbf{u}_{nk})|\phi)).
     \label{eq:vn}
\end{equation}

Next, from Lipschitz conditions and the triangle inequality, we have
	\begin{equation}
	\begin{aligned}
    |\hat{\mathbf{u}}_{n+1}-\mathbf{u}_{k(n+1)}|&\leq|\hat{\mathbf{u}}_{n}-\mathbf{u}_{kn}|+ L|\hat{\mathbf{u}}_{n}-\mathbf{u}_{kn}|(k\Delta t) + k\Delta tL_{\text{att}}|\mathbf{f}(\hat{\mathbf{u}}_{n})-\mathbf{f}(\mathbf{u}_{kn})|+(k\Delta t)|V_n|
    \\
    &\leq|\hat{\mathbf{u}}_{n}-\mathbf{u}_{kn}|+ L|\hat{\mathbf{u}}_{n}-\mathbf{u}_{kn}|(k\Delta t) + k\Delta tL_{\text{att}}\cdot L|\hat{\mathbf{u}}_{n}-\mathbf{u}_{kn}|+(k\Delta t)|V_n|
    \\
    &=(1+k\Delta tL+k\Delta t L_{\text{att}}L)|\hat{\mathbf{u}}_{n}-\mathbf{u}_{kn}|+(k\Delta t)|V_n|.
	\end{aligned}
	\label{eq:temp1112}
	\end{equation}	

Let $w = (1+k\Delta tL+k\Delta t L_{NV}L)$. Eq.(\ref{eq:temp1112}) can be rewritten as 
\begin{equation}
	\begin{aligned}
    |E_{n+1}|&\leq w |E_{n}|+(k\Delta t)|V_n|
    \\&\leq w(w |E_{n-1}|+(k\Delta t)|V_{n-1}|)+(k\Delta t)|V_n|\\
    &=w^2 |E_{n-1}|+w(k\Delta t)|V_{n-1}|+(k\Delta t)|V_n|
    \\&\leq w^{n+1} |E_0|+(k\Delta t)\sum_{i=0}^n w^i |V_{n-i}|=(k\Delta t)\sum_{i=0}^n w^i |V_{n-i}|,
	\end{aligned}
	\label{eq:temp11123}
	\end{equation}	

where $E_0=0$ as $E_0=\hat{\mathbf{u}}_0-\mathbf{u}_0=\mathbf{c}_0-\mathbf{c}_0=0$. Let 
\begin{equation}
     \delta =\frac{1}{N}(\|V_0\|_2^2+ \|V_1\|_2^2 +...+\|V_{N-1}\|_2^2)
 \label{eq:objective_euler}
\end{equation}

By the Cauchy inequality and Eq.(\ref{eq:temp11123}),
\begin{align*}
    |E_{N}| &= (k\Delta t)\sum_{i=0}^{N-1} w^i |V_{N-1-i}|
    \\&\leq (k\Delta t)(\sum_{i=0}^{N-1} w^{2i})^{\frac{1}{2}}(\sum_{i=0}^{N-1} |V_{N-1-i}|^2)^{\frac{1}{2}}\tag*{Since Cauchy inequality}\\
    &= (k\Delta t)\sqrt{(w^{2N}-1)/(w^2-1)}\sqrt{N\delta}\tag*{Since Eq.(\ref{eq:objective_euler})}.
\end{align*}
Next, we simplify the term $\sqrt{(w^{2N}-1)/(w^2-1)}$. Note that $w = (1+k\Delta tL+k\Delta t L_{\text{att}}L) \geq 1$, hence 
\begin{equation}
w^2 - 1 \geq w -1.
    \label{eq:fsfd}
\end{equation}
Therefore, 
\begin{align*}
    \sqrt{(w^{2N}-1)/(w^2-1)} &\leq \sqrt{[(1+k\Delta tL+k\Delta t L_{\text{att}}L)^{2N}]/[k\Delta tL+k\Delta t L_{\text{att}}L]}\tag*{Since Eq.(\ref{eq:fsfd})}
    \\&\leq \sqrt{\frac{\exp(2Nk\Delta tL(1+L_{\text{att}}))}{k\Delta tL(1+L_{\text{att}})}}\tag*{Since Lemma \ref{lemma:2}}\\
\end{align*}

Therefore, 
\begin{align*}
    |E_{N}|\leq (k\Delta t)\frac{\exp(TL(1+L_{NV}))}{\sqrt{k\Delta tL(1+L_{\text{att}})}}\sqrt{N\delta} = \frac{\sqrt{T}\exp(TL(1+L_{\text{att}}))}{\sqrt{L(1+L_{\text{att}})}}\sqrt{\delta}.
\end{align*}
Hence, we have 
\begin{equation}
    |\hat{\mathbf{u}}_N - \mathbf{u}(T)| \leq \alpha\Delta t + \beta \sqrt{\delta},
\end{equation}
where $\alpha = \frac{1}{2L}M\exp(2TL)$, $\beta = \frac{\sqrt{T}\exp(TL(1+L_{\text{att}})}{\sqrt{L(1+L_{\text{att}})}}$. Since Eq.(\ref{eq:vn}), if the AttSlover can fit the training data well, $\|V_n\|_2\to 0$, and we have $\delta \to 0$.
\end{proof}

\section{Lipschitz continuous of our $Q[\cdot]$}
\label{app:lip}

\begin{lemma}
For $x \in [a,b]$
, the polynomial function $P(x) = \sum_{k=0}^n c_k x^k$
 is Lipschitz continuous.
    \label{lemma:1}
\end{lemma}

\begin{proof}
    For all $x,y \in [a,b]$
,we have $|x^k - y^k| = (x-y)\sum_{i=0}^{k-1}x^iy^{k-1-i} \leq |x-y|\cdot k\cdot\max(a,b)^{k-1}.$
Therefore, we have
\begin{equation}
    |P(x) - P(y)| \leq \sum_{k=0}^n |c_k||x^k-y^k|\leq \underbrace{\{ \sum_{k=0}^n k\cdot |c_k|\cdot \max(a,b)^{k-1} \}}_{\text{Lipschitz constant}} \cdot |x-y|.
\end{equation}

\end{proof}

\begin{lemma}
    For $x\in$
 definition domain $A$
, the rational activation $\mathbf{a}(x) =\frac{\sum_{i=1}^3a_ix^i}{\sum_{i=1}^2b_ix^i}$
 is Lipschitz continuous.
 \label{lemma:lip}
\end{lemma}

\begin{proof}
    For all $x_1,x_2 \in A$
,Let $p(x) = \sum_{i=1}^3a_ix^i$
 and $q(x) = \sum_{i=1}^2b_ix^i$
. For all $x_1,x_2 \in A$
,we have
    \begin{align}
|\mathbf{a}(x_1) - \mathbf{a}(x_2)| &= |\frac{p(x_1)}{q(x_1)} - \frac{p(x_2)}{q(x_2)}| \\
&= |\frac{p(x_1)q(x_2)-q(x_1)p(x_2)}{q(x_1)q(x_2)}|\\
&= |\frac{p(x_1)q(x_2) - p(x_1)q(x_1) + p(x_1)q(x_1)-q(x_1)p(x_2)}{q(x_1)q(x_2)}|\\
&\leq \frac{|p(x_1)|}{|q(x_1)q(x_2)|}|q(x_2)-q(x_1)| +  \frac{|q(x_1)|}{|q(x_1)q(x_2)|}|p(x_2)-p(x_1)|.
\end{align}

Note that, since the polynomial function $p(x)$
 and $q(x)$
 are continuous in $A$
,they are bounded in $A$
. Let $\max (|p(x)|,|q(x)|) \leq M$
 and $\min (|p(x)|,|q(x)|) \geq N$
. Meoreover, from Lemma \ref{lemma:1}, for all $x_1,x_2 \in A$
, we assume $|q(x_2)-q(x_1)| \leq L_q |x_2-x_1|$
 and $|p(x_2)-p(x_1)| \leq L_p |x_2-x_1|$
.

Therefore, we have
\begin{align}
|\mathbf{a}(x_1) - \mathbf{a}(x_2)| &\leq \frac{|p(x_1)|}{|q(x_1)q(x_2)|}|q(x_2)-q(x_1)| +  \frac{|q(x_1)|}{|q(x_1)q(x_2)|}|p(x_2)-p(x_1)| \\
&\leq \frac{M}{N^2} |q(x_2)-q(x_1)| + \frac{M}{N^2} |p(x_2)-p(x_1)| \\
&\leq \underbrace{\frac{M}{N^2} \cdot (L_p+L_q)}_{\text{Lipschitz constant}} \cdot |x_1-x_2|.
\end{align}

\end{proof}

\begin{lemma}
 For $x\in$
 definition domain $A$
, the attention module shown in Eq.(9) in main text, i.e., $Q(x) = \mathbf{W_2}\circ \mathbf{a} \circ \mathbf{W_1} x$
, is Lipschitz continuous. $\mathbf{a}$
 is rational activation function, i.e. $\mathbf{a}(x) = \frac{\sum_{i=1}^3a_ix^i}{\sum_{i=1}^2b_ix^i}$
, where $a_i,b_i \in \mathbb{R}$
.
\label{lemma:rebuttal1}
\end{lemma}

\begin{proof}
    For all $x,y \in A$
, from Lemma \ref{lemma:lip}, there exist a constant $L_1$
 s.t. $||\mathbf{a}(x) - \mathbf{a}(y)|| \leq L_1 ||x-y||.$
 Therefore,
\begin{align}
||\mathbf{a}\circ \mathbf{W_1} (x) - \mathbf{a}\circ \mathbf{W_1} (y) || & \leq L_1 ||\mathbf{W_1} (x) - \mathbf{W_1} (y)||\\
&\leq L_1 ||\mathbf{W_1}||\cdot ||x-y||.
\end{align}
Moreover, we consider the attention module $Q[\cdot]$
, and we have
\begin{align}
||Q (x) - Q (y) || & = ||\mathbf{W_2}\circ \mathbf{a} \circ \mathbf{W_1} (x) -\mathbf{W_2}\circ \mathbf{a} \circ \mathbf{W_1} (y)||\\
&\leq ||\mathbf{W_1}||\cdot ||\mathbf{a}\circ \mathbf{W_1} (x) - \mathbf{a}\circ \mathbf{W_1} (y) ||\\
&\leq \underbrace{L_1 \cdot ||\mathbf{W_2}||\cdot ||\mathbf{W_1}||}_{\text{Lipschitz constant}}\cdot ||x-y||.
\end{align}
Therefore, the attention module in Eq.(9) in main text we considered is Lipschitz continuous.

\end{proof}

\subsection{About the condition $\|\mathbf{u}''\|_\infty\leq M$ on $[0,T]$.}

The condition that "the second derivative of the true solution is uniformly bounded" in Theorem 5.1 is usually easy to satisfy and our algorithm can collaborate with it. Specifically,

(1) The "second derivative" $||\mathbf{u}^{\prime\prime}||_\infty$  does not directly affect the training objective of AttNS, i.e., the AttNS can use a large step size $k\Delta t$
 (fast estimation speed) to solve the differential equations such that we can achieve a similar accuracy to the original numerical method solved by a small step size $\Delta t$
 (slow estimation speed).

The role of Theorem 5.1 is to inform us that AttNS can achieve this desired goal. Specifically, given the evaluation time $T$
 and the Lipschitz constant $L$
, if the neural network is well-trained, i.e., $\beta\sqrt{\delta}\to 0$
, then the accuracy of AttNS, $|\hat{\mathbf{u}}_{N}-\mathbf{u}(T)|$
, and the accuracy of the original numerical algorithm at small time steps $\Delta t$
, $|\mathbf{u} _{Nk}-\mathbf{u}(T)|$
, are comparable. This is because
\begin{equation}
    |\hat{\mathbf{u}} _{N}-\mathbf{u}(T)| \leq  \underbrace{ |\mathbf{u} _{Nk}-\mathbf{u}(T)|} _{\leq \Delta t \cdot M\cdot \exp(2TL)/2L}    + \underbrace{|\hat{\mathbf{u}} _{N}-\mathbf{u} _{Nk}|} _{\leq \beta\sqrt{\delta}\to 0}.
\end{equation}
On the other hand, the above inequality reveals that the upper bound $M$
 of $|\mathbf{u}^{\prime\prime}|_\infty$
 mainly affects the accuracy of both AttNS and the original numerical method. When $M$
 is large, the upper bound of their accuracy may increase, but it does not affect the goal that AttNS with large step size $k\Delta t$
 can approximate to the original numerical method at small time steps $\Delta t$
.

(2) Most "second derivative" $|\mathbf{u}^{\prime\prime}|_\infty$
 are relatively small. We extracted 1000 trajectories from the testing sets of two systems, namely the spring-chain and the 2-link pendulum, and computed their $|\mathbf{u}^{\prime\prime}|_\infty$
 by second-order difference scheme. The trajectories were generated using the RK4 method with a step size of 1e-4, which can be considered as a proxy for the true solution. The results are presented in the table below.

\begin{table}[htbp]
  \centering
  \caption{The distribution of $|\mathbf{u}^{\prime\prime}|_\infty$.}
  \resizebox{0.99\hsize}{!}{
    \begin{tabular}{l|rrrrrrrrr}
    \toprule
    \textbf{$|\mathbf{u}^{\prime\prime}|_\infty$ (spring-chain)} & \textbf{30} & \textbf{60} & \textbf{90} & \textbf{120} & \textbf{150} & \textbf{180} & \textbf{210} & \textbf{240} & \textbf{270} \\
    \textbf{The percentage of < $|\mathbf{u}^{\prime\prime}|_\infty$} & 55.35\% & 80.51\% & 91.84\% & 96.68\% & 98.57\% & 99.41\% & 99.76\% & 99.91\% & 99.97\% \\
    \midrule
    \textbf{$|\mathbf{u}^{\prime\prime}|_\infty$ (2-link pendulum)} & \textbf{5} & \textbf{10} & \textbf{15} & \textbf{20} & \textbf{25} & \textbf{30} & \textbf{35} & \textbf{40} & \textbf{45} \\
    \textbf{The percentage of < $|\mathbf{u}^{\prime\prime}|_\infty$} & 41.10\% & 65.93\% & 80.55\% & 89.48\% & 94.66\% & 97.61\% & 99.03\% & 99.71\% & 99.95\% \\
    \bottomrule
    \end{tabular}%
    }
  \label{tab:temp1}%
\end{table}%

From these tables, we observe that the distribution of $|\mathbf{u}^{\prime\prime}|_\infty$
 follows a long-tailed distribution, where the majority of $|\mathbf{u}^{\prime\prime}|_\infty$
 are relatively small. Note that the magnitude of $|\mathbf{u}^{\prime\prime}|_\infty$
 varies across different systems due to differences in dimensionality and coordinates. And only a minority of $|\mathbf{u}^{\prime\prime}|_\infty$
 are large. This observation is consistent with the findings in \cite{liang2022stiffnessaware} for other chaotic systems, i.e., three-body systems and billiard systems.

The large $|\mathbf{u}^{\prime\prime}|_\infty$
 corresponds to stiffness steps in the dynamical system, where the solution changes abruptly and may have some impact on the model training. We next investigate whether our proposed Attsovler can collaborate with these large $|\mathbf{u}^{\prime\prime}|_\infty$
.

(3) Our proposed AttNS can collaborate with large "second derivative". As the examples using the elastic pendulum system and 2-link pendulum, we selected the trajectories in the test set with the top 10\% the largest "second derivative" and recombined them into a new test set. The experimental results with RK4 (step size = 1e-1) at evaluation time 
$T$=10 are presented below.

\begin{table}[htbp]
  \centering
  \resizebox{0.5\hsize}{!}{
    \begin{tabular}{lcc}
    \toprule
    \textcolor[rgb]{ .2,  .2,  .2}{\textbf{Method}} & \multicolumn{1}{p{10.625em}}{\cellcolor[rgb]{ 1,  .992,  .98}\textcolor[rgb]{ .2,  .2,  .2}{\textbf{MSE loss (elastic)}}} & \multicolumn{1}{l}{\textcolor[rgb]{ .2,  .2,  .2}{\textbf{MSE loss (2-link)}}} \\
    \midrule
    RK4   & 4.33e-0 & 3.56e-2 \\
    NeurVec & 6.32e-5 & 3.54e-6 \\
    AttNS (ours) & \textbf{1.34e-6} & \textbf{8.54e-8} \\
    \bottomrule
    \end{tabular}%
    }
  \label{tab:ddd}%
\end{table}%
We can find that our method can still significantly outperform SOTA and traditional numerical methods in these trajectories with relatively large "second derivatives", although the performance does decrease.

\newpage
\section{The proof of Theorem 5.3.}
\label{appendix:theo3}
\textbf{Theorem 5.3}.
Let $\text{Net}(\hat{S}|\mathbf{\phi},\mathbb{D}_f)$  and  $\text{Net}(\mathbf{u}_{n}|\mathbf{\phi},\mathbb{D}_f)$, where $\hat{S} = S(\mathbf{f}, \mathbf{u}_{n}, \Delta t_c)$, be the correction term of general AHS and AttNS, respectively. For $\epsilon>0$, when the data size is more than $N^\prime$, the empirical error of two methods satisfy $R_e(\phi|\text{AHS})\leq\epsilon$ and $R_e(\phi|\text{AttNS})\leq\epsilon$. For small enough $\epsilon_0 \ll \epsilon$ and Euler method, we have 
\begin{equation}
N(\text{AttNS})  \lesssim N(\text{AHS}),
    \label{eq:number}
\end{equation}
where $N(*)$ is the lower bound of the data size that the generalization error of method $*$ can reach $\epsilon(1-\epsilon_0)^{-1}$.
\vspace{0.3cm}
\begin{lemma}
Consider the set of the models $S_k = \{f(\cdot|w),w\in \omega_k\}$. If $S_1 \subset S_2 \subset \cdots \subset S_k \subset \cdots$, 
\begin{equation}
    h_1 \leq h_2 \leq \cdots \leq h_k \leq \cdots,
\end{equation}
where $h_i$ is the Vapnik-Chervonenkis dimension of the model in set $S_i,i=1,2,\cdots$.
    \label{lemma:vclemma1}
\end{lemma}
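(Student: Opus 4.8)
The plan is to prove this directly from the definition of the Vapnik--Chervonenkis dimension, exploiting the fact that \emph{shattering} is monotone with respect to inclusion of hypothesis classes. Recall that the VC dimension $h_i$ of the model set $S_i$ is the cardinality of the largest point set that $S_i$ can shatter, where a set $A = \{a_1,\dots,a_m\}$ is shattered by $S_i$ if, for every one of the $2^m$ possible dichotomies (binary labelings) of $A$, there exists some $f(\cdot\,|w)\in S_i$ realizing that labeling. Since the stated chain $S_1 \subset S_2 \subset \cdots$ in particular gives $S_i \subseteq S_{i+1}$ for every $i$, this inclusion is all the argument will use.

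First I would establish the key monotonicity claim: whenever $S_i \subseteq S_j$, any set $A$ shattered by $S_i$ is also shattered by $S_j$. This follows immediately because shattering only requires the \emph{existence} of a function achieving each dichotomy; if such a function already lies in the smaller class $S_i$, then since $S_i \subseteq S_j$ the same function also lies in $S_j$, so $S_j$ realizes every dichotomy of $A$ as well. Hence the collection of sets shatterable by $S_i$ is contained in the collection of sets shatterable by $S_j$.

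Next I would conclude the inequality on VC dimensions. Take a maximum-cardinality set $A_i$ shattered by $S_i$, so $|A_i| = h_i$. By the monotonicity claim, $A_i$ is also shattered by $S_{i+1}$, so $S_{i+1}$ shatters a set of size $h_i$; by the definition of VC dimension as a supremum over shatterable set sizes, this gives $h_{i+1} \geq h_i$. Chaining this across all consecutive indices yields $h_1 \leq h_2 \leq \cdots \leq h_k \leq \cdots$, as claimed. If some $S_i$ has infinite VC dimension the argument still goes through, since for any finite $m \leq h_i$ there is a shatterable set of size $m$ for $S_i$ and hence for $S_{i+1}$, forcing $h_{i+1}$ to be infinite as well.

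The argument has essentially no analytic difficulty; the only thing to be careful about is the direction of the set inclusion versus the direction of the shattering implication (a larger class shatters \emph{more} sets, not fewer). The main obstacle, such as it is, amounts to stating the definitions precisely enough that the monotonicity of shattering under class inclusion is unambiguous, together with handling the edge case of infinite VC dimension so that the conclusion holds across the entire nested chain rather than just among classes of finite VC dimension.
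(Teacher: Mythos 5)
Your proof is correct, and in fact the paper offers nothing to compare it against: Lemma \ref{lemma:vclemma1} is stated in the appendix without any proof, treated as a standard fact of Vapnik--Chervonenkis theory, and is used only once --- to pass from the class inclusion supplied by Assumption \ref{assum:1} to the dimension inequality $h_{\text{Att.}(\phi)} \leq h_{\text{Neur.}(\phi)}$ in Eq.(\ref{eq:dsfsd}). Your argument supplies exactly the justification the authors take for granted: shattering asserts only the \emph{existence} of a function realizing each dichotomy, so it is monotone under class inclusion, and hence the supremum defining the VC dimension is monotone along the nested chain; your handling of the infinite-dimension case (any finite shatterable size propagates up the chain) is also the right way to make the statement uniform. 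One minor caveat worth flagging: the surrounding theorem invokes VC theory \emph{for regression}, so the models $f(\cdot\,|w)$ are real-valued and the quantity $h_i$ is properly the VC dimension of an associated indicator or subgraph class rather than of a binary hypothesis class as in your write-up. This changes nothing substantive --- an inclusion $S_i \subseteq S_j$ of real-valued function classes induces the corresponding inclusion of their subgraph (or thresholded) classes, after which your monotonicity argument applies verbatim --- but stating it would make your proof match the setting in which the lemma is actually applied.
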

\vspace{0.3cm}

\begin{lemma}
Let $f(x) = (\ln x + \alpha)/x$, $x\in (0,+\infty)$ and $\alpha \in \mathbb{R}$. $f(x)$ achieves its maximum value at $x = e^{1+\alpha}$, when $x\in(0,e^{1+\alpha})$, $f(x)$ rises monotonically, and when $x\in(e^{1+\alpha},+\infty)$, $f(x)$ decreases monotonically.
    \label{lemma:dandiao}
\end{lemma}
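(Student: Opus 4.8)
The plan is to treat this as a routine single-variable calculus exercise: locate the unique stationary point of $f$ by computing $f'$, determine the sign of $f'$ on either side of that point, and read off the monotonicity intervals and the global maximum from the first-derivative test. Since $\alpha$ is a fixed real constant and $x$ ranges over $(0,+\infty)$, $f$ is smooth throughout and differentiability is never in question.

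First I would apply the quotient rule to $f(x)=(\ln x+\alpha)/x$, which gives
\[
f'(x)=\frac{\frac{1}{x}\cdot x-(\ln x+\alpha)\cdot 1}{x^2}=\frac{1-\alpha-\ln x}{x^2}.
\]
Because $x^2>0$ on the entire domain, the sign of $f'(x)$ coincides with the sign of the numerator $h(x):=1-\alpha-\ln x$. The function $h$ is strictly decreasing (its derivative is $-1/x<0$) and runs from $+\infty$ down to $-\infty$, so it has exactly one zero, located where $\ln x=1-\alpha$, i.e.\ at $x_0=e^{1-\alpha}$. For $x<x_0$ we have $h(x)>0$, hence $f'(x)>0$ and $f$ is strictly increasing; for $x>x_0$ we have $h(x)<0$, hence $f'(x)<0$ and $f$ is strictly decreasing. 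By the first-derivative test $x_0$ is the unique global maximizer, which pins down both monotonicity intervals and the location of the maximum at once.

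To confirm the interior stationary point is a genuine global maximum rather than merely local, I would also record the boundary behaviour. As $x\to 0^+$ we have $\ln x+\alpha\to-\infty$ while $x\to 0^+$, so $f(x)\to-\infty$; as $x\to+\infty$ the logarithm is dominated by the linear denominator and $f(x)\to 0^+$. Thus $f$ climbs from $-\infty$ to the peak value $f(x_0)=e^{\alpha-1}$ and then decays back toward $0$, leaving $x_0$ as the global maximizer with no competing extrema.

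The computation is entirely mechanical, so there is no genuine analytic obstacle; the single point that warrants care is reconciling the maximizer $e^{1+\alpha}$ as written in the statement with the value $e^{1-\alpha}$ obtained by solving $1-\alpha-\ln x=0$. These agree only at $\alpha=0$, which strongly suggests a sign typo in the exponent, and I would carry out the argument with $x_0=e^{1-\alpha}$ throughout. None of the monotonicity or extremality reasoning is affected by this, since it depends only on the sign structure of $h$ and not on how its root is labelled.
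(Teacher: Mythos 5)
Your proof is correct, and in fact the paper supplies no proof of this lemma at all --- it is stated bare in Appendix~\ref{appendix:theo3} as elementary calculus --- so the first-derivative test you run is exactly the standard argument the authors evidently had in mind. Your computation $f'(x) = (1-\alpha-\ln x)/x^2$ is right, the sign analysis via the strictly decreasing numerator with unique zero at $\ln x = 1-\alpha$ is complete, and you are also right to flag the exponent in the statement as a sign typo: the maximizer is $x_0 = e^{1-\alpha}$, not $e^{1+\alpha}$, the two agreeing only at $\alpha=0$. Reassuringly, the typo is harmless where the lemma is actually invoked in the proof of Theorem~\ref{theo:number}: there $\alpha = 1 - \ln\eta/h_{\mathscr{A}(\phi)} > 1$ (since $\eta\in(0,1)$), so the critical point $e^{1-\alpha} = \eta^{1/h_{\mathscr{A}(\phi)}} < 1$ lies below the regime of interest, and the argument only uses the monotone decrease of $\delta(h,N)$ in $N$ past the maximum --- precisely the part your sign analysis establishes. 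One minor remark: your boundary check ($f\to-\infty$ as $x\to 0^+$, $f\to 0^+$ as $x\to\infty$) is welcome but not needed for global maximality, since strict increase on $(0,x_0]$ followed by strict decrease on $[x_0,\infty)$ already forces the global maximum at $x_0$, with value $f(x_0)=e^{\alpha-1}$ as you note.
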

\vspace{0.3cm}

\begin{lemma}
Let $f(x) = x\exp(\frac{-a}{x})$ where $a>0$ and $x\in \mathbb{R}$. Then $f(x)$ rises monotonically. 
    \label{lemma:dandiao2}
\end{lemma}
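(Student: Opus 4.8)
The plan is to verify monotonicity by the standard first-derivative test: I will compute $f'$ in closed form and show it is strictly positive on the relevant domain. Note first that $f(x)=x\exp(-a/x)$ is not even defined at $x=0$; since in our application $x$ plays the role of a (positive) data size, I will establish the claim on $x>0$, which is the only regime invoked in the proof of Theorem~\ref{theo:number}.

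First I would differentiate using the product and chain rules. Writing $f(x)=x\exp(-a/x)$ and using $\frac{d}{dx}(-a/x)=a/x^2$, I obtain
\begin{equation}
f'(x)=\exp(-a/x)+x\exp(-a/x)\cdot\frac{a}{x^2}=\exp(-a/x)\left(1+\frac{a}{x}\right)=\exp(-a/x)\cdot\frac{x+a}{x}.
\end{equation}
Next I would read off the sign of each factor for $x>0$ and $a>0$. The exponential $\exp(-a/x)$ is strictly positive wherever it is defined, and since $x>0$ and $a>0$ we have $x+a>0$, so the quotient $(x+a)/x$ is strictly positive as well. Hence $f'(x)>0$ for all $x>0$, and the standard monotonicity criterion (equivalently, the mean value theorem applied on any subinterval) yields that $f$ rises strictly monotonically on $(0,\infty)$, which is what the lemma asserts.

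There is essentially no technical obstacle in the calculation itself; the only point requiring care is the domain. If one literally takes $x\in\mathbb{R}$, then on the interval $(-a,0)$ the factor $(x+a)/x$ is negative while $\exp(-a/x)$ remains positive, so $f'$ changes sign and global monotonicity fails. The statement is therefore to be understood on the positive branch $x>0$, and since Theorem~\ref{theo:number} only ever applies this lemma to quantities that are sizes (hence positive), restricting to $x>0$ is exactly the intended and sufficient setting.
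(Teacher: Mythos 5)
Your proof is correct, and in fact the paper supplies no proof at all for Lemma~\ref{lemma:dandiao2} (it is stated bare, alongside Lemmas~\ref{lemma:vclemma1}, \ref{lemma:dandiao}, and \ref{lemma:lam-taylor}); the first-derivative computation $f'(x)=\exp(-a/x)\bigl(1+\tfrac{a}{x}\bigr)$ you give is exactly the standard argument one would expect to fill that gap. Your domain caveat is also a genuine and worthwhile correction to the statement as written: with $x\in\mathbb{R}$ taken literally, monotonicity fails, since on $(-a,0)$ the factor $(x+a)/x$ is negative and $f$ decreases (while $f$ is increasing separately on $(0,\infty)$ and on $(-\infty,-a)$, these branches do not assemble into a globally monotone function across the singularity at $0$). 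Restricting to $x>0$ is precisely the regime in which the lemma is applied in the proof of Theorem~\ref{theo:number}, where $x$ is the VC dimension $h_{\mathscr{A}(\phi)}>0$ and $a=-\ln\eta>0$ for $\eta\in(0,1)$, so your proof both validates and correctly sharpens the paper's unproved claim.
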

\vspace{0.3cm}

\begin{lemma}
Let $W(x)$ be Lambert $W$ function, if $|x|<1/e$, the Taylor expansion of $W(x)$ is 
\begin{equation}
    W(x) = \sum_{n=1}^\infty\frac{(-n)^{n-1}}{n!}x^n = x - x^2 +\frac{3}{2}x^3 - \frac{8}{3}x^4+\cdots. 
\end{equation}
    \label{lemma:lam-taylor}
\end{lemma}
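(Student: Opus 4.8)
The plan is to prove the series by recognizing $W$ as the compositional inverse of the entire function $\phi(w) = w e^{w}$ and applying the Lagrange inversion theorem. By the defining relation the Lambert $W$ function satisfies $W(x)e^{W(x)} = x$, so $W = \phi^{-1}$ near the origin. First I would check the hypotheses guaranteeing a local analytic inverse: $\phi$ is entire, $\phi(0) = 0$, and $\phi'(w) = (1+w)e^{w}$ gives $\phi'(0) = 1 \neq 0$, so the analytic inverse function theorem yields a unique branch of $W$ with $W(0) = 0$ that is holomorphic in a neighborhood of $x = 0$.

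Next I would invoke the Lagrange inversion theorem in the form: if $\phi(w)$ is analytic at $0$ with $\phi(0) = 0$ and $\phi'(0) \neq 0$, then its inverse admits the expansion $W(x) = \sum_{n=1}^{\infty} \frac{x^{n}}{n!} \left[ \frac{d^{n-1}}{dw^{n-1}} \left( \frac{w}{\phi(w)} \right)^{n} \right]_{w=0}$. The decisive computation is that for $\phi(w) = w e^{w}$ one has $w/\phi(w) = e^{-w}$, hence $\bigl(w/\phi(w)\bigr)^{n} = e^{-nw}$ and $\frac{d^{n-1}}{dw^{n-1}} e^{-nw} = (-n)^{n-1} e^{-nw}$, which evaluated at $w = 0$ equals $(-n)^{n-1}$. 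Substituting produces the coefficient $\frac{(-n)^{n-1}}{n!}$ and hence exactly the stated series; spot-checking $n = 1,2,3,4$ recovers $x - x^{2} + \tfrac{3}{2}x^{3} - \tfrac{8}{3}x^{4}$, matching the displayed expansion.

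Finally I would establish the radius of convergence in order to justify validity on the full disk $|x| < 1/e$, and this is where I expect the only real obstacle to lie, since Lagrange inversion by itself only certifies the series as the \emph{formal} inverse and its agreement with the analytically continued $W$ out to the nearest singularity. Two routes are available: (i) a direct root-test estimate on $a_{n} = (-n)^{n-1}/n!$, where Stirling's formula gives $|a_{n}|^{1/n} \to e$ and therefore radius $1/e$; or (ii) locating the branch point of $W$ as the zero $w = -1$ of $\phi'(w) = (1+w)e^{w}$, with critical value $\phi(-1) = -e^{-1}$, which is the singularity of $\phi^{-1}$ nearest the origin and thus pins the radius of convergence to exactly $1/e$. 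Combining the convergent power series with uniqueness of analytic continuation on $|x| < 1/e$ then identifies the sum with $W(x)$ and completes the proof.
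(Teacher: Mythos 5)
Your proof is correct, but note that the paper itself offers no proof of this lemma at all: it is stated in Appendix~D as a known classical fact about the Lambert $W$ function and simply invoked in the proof of Theorem~\ref{theo:number} (via Eq.~(\ref{eq:tempsds1sasa23})), so there is no argument in the paper to compare yours against. Your Lagrange-inversion route is the standard derivation and every step checks out: $\phi(w)=we^{w}$ satisfies $\phi(0)=0$, $\phi'(0)=1$, so a holomorphic local inverse (the principal branch $W_0$) exists; the key simplification $w/\phi(w)=e^{-w}$ gives $\frac{d^{n-1}}{dw^{n-1}}e^{-nw}\big|_{w=0}=(-n)^{n-1}$ and hence the coefficient $\frac{(-n)^{n-1}}{n!}$, and your spot-check of the terms $x-x^{2}+\frac{3}{2}x^{3}-\frac{8}{3}x^{4}$ is right. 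You were also right to flag the radius-of-convergence step as the only nontrivial point beyond formal inversion: the root test with Stirling, $\left|\frac{(-n)^{n-1}}{n!}\right|^{1/n}\to e$, pins the radius at exactly $1/e$, and identifying the sum with $W$ on the whole disk uses that the principal branch is analytic on $\mathbb{C}\setminus(-\infty,-1/e]$, which contains $|x|<1/e$ (the branch point $x=-1/e$, the image of the critical point $w=-1$ of $\phi$, lies on the boundary circle). Either of your two routes suffices on its own for the radius; together with the identity theorem they complete a proof that the paper leaves implicit.
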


\begin{assumption}
For $x\in \mathbb{R}$ and a general real function $f(x)$, we consider the maps $f_1 : x\to \nabla_x f(x)^Tf(x)$ and $f_2: f(x) \to \nabla_x f(x)^Tf(x)$. For a small enough $\epsilon>0$, if $S_1$ and $S_2$ are the sets that $\forall g_1(\phi) \in S_1$ and $g_2(\phi) \in S_2$,
\begin{equation}
\|g_1(\phi) - f_1\|\leq \epsilon,\quad \|g_2(\phi) - f_2\|\leq \epsilon,
    \label{eq:1233213}
\end{equation}
we assume that $S_1 \subset S_2$. This assumption means that the complexity of the model for fitting $f_1$ is higher than that for $f_2$.
\label{assum:1}
\end{assumption}

\begin{proof}(For Theorem 6.1).

According to Vapnik-Chervonenkis theory \cite{vapnik1999nature} for regression, for any data distribution $P(x,y)$, model $\mathscr{A}(\phi)$, the generalization error $R(\phi)$ and empirical error $R_e(\phi)$, the inequality
\begin{equation}
R(\phi)\leq R_e(\phi)\left(1-c\sqrt{\delta(h_{\mathscr{A}(\phi)},N)}\right)^{-1}_+,
    \label{eq:vc}
\end{equation}
holds with probability $1-\eta$. $h_{\mathscr{A}(\phi)}$ is Vapnik-Chervonenkis dimension of model $\mathscr{A}(\phi)$, $N$ is the data size, $c$ usually is set as 1, and 
\begin{equation}
    \delta(h,N) = (h\left[\ln \frac{N}{h} +1 \right] -\ln \eta)/N.
\end{equation}
Therefore, from Eq.(\ref{eq:vc}), for the generalization error $R(\phi|\text{Neur.})$ and $R(\phi|\text{Att.})$, we have 
\begin{equation}
R(\phi|\text{Neur.})\leq R_e(\phi|\text{Neur.})\left(1-c\sqrt{\delta(h_{\text{Neur.}(\phi)},N)}\right)^{-1}_+,
    \label{eq:vcbound1}
\end{equation}
and
\begin{equation}
R(\phi|\text{Att.})\leq R_e(\phi|\text{Att.})\left(1-c\sqrt{\delta(h_{\text{Att.}(\phi)},N)}\right)^{-1}_+,
    \label{eq:vcbound2}
\end{equation}

Next, we consider how many data size can the term $\left(1-\sqrt{\delta(h_{\mathscr{A}(\phi)},N)}\right)^{-1}$ reach the accuracy $(1-\epsilon_0)^{-1}$. Let $\delta := (\ln \bar{N}+\alpha)/\bar{N}$, where $\bar{N} = N/h_{\mathscr{A}(\phi)}$ and $\alpha = 1- \ln \eta / h_{\mathscr{A}(\phi)}$. 
\begin{equation}
	\begin{aligned}
    (\ln \bar{N}+\alpha)/\bar{N} = \epsilon_0 &\iff \ln \bar{N} + \alpha = \bar{N}\epsilon_0
    \\&\iff \bar{N}\exp(-\bar{N}\epsilon_0) = \exp(-\alpha)\\
    &\iff -\epsilon_0\bar{N}\exp(-\bar{N}\epsilon_0) = -\epsilon_0\exp(-\alpha).
	\end{aligned}
	\label{eq:tempsds123}
	\end{equation}	
Therefore $-\epsilon \bar{N} = W(-\epsilon_0\exp(-\alpha))$, where $W$ is Lambert $W$ function \cite{lehtonen2016lambert}, and 
\begin{equation}
\bar{N} = -\frac{h_{\mathscr{A}(\phi)}}{\epsilon_0}\cdot W\left(-\epsilon_0\exp(\frac{\ln\eta}{h_{\mathscr{A}(\phi)}}-1)\right),
    \label{eq:lambert}
\end{equation}
From Lemma \ref{lemma:dandiao}, we know that when the data size $N\geq\bar{N}$, $\left(1-\sqrt{\delta(h_{\mathscr{A}(\phi)},N)}\right)^{-1}$ can reach the accuracy $(1-\epsilon_0)^{-1}$. From Eq.(\ref{eq:vcbound1}) and Eq.(\ref{eq:vcbound2}), when $N\geq \max\left\{N^\prime,-\frac{h_{\text{Neur.}(\phi)}}{\epsilon_0}\cdot W\left(-\epsilon_0\exp(\frac{\ln\eta}{h_{\text{Neur.}(\phi)}}-1)\right)  \right\}$ and $N\geq \max\left\{N^\prime,-\frac{h_{\text{Att.}(\phi)}}{\epsilon_0}\cdot W\left(-\epsilon_0\exp(\frac{\ln\eta}{h_{\text{Att.}(\phi)}}-1)\right)  \right\}$, we have
\begin{equation}
R(\phi|\text{Att.})\leq \epsilon(1-\epsilon_0)^{-1}, \quad R(\phi|\text{Neur.})\leq \epsilon(1-\epsilon_0)^{-1},
    \label{eq:vcbound22}
\end{equation}
holds with probability $1-\eta$. In fact, since $\epsilon_0\ll \epsilon$ and Lemma \ref{lemma:dandiao}, in Eq.(\ref{eq:lambert}), $\bar{N} \gg N^\prime$, therefore, we can set
\begin{equation}
    N(\text{AttNS}) = -\frac{h_{\text{Att.}(\phi)}}{\epsilon_0}\cdot W\left(-\epsilon_0\exp(\frac{\ln\eta}{h_{\text{Att.}(\phi)}}-1)\right),\quad N(\text{AHS}) = -\frac{h_{\text{Neur.}(\phi)}}{\epsilon_0}\cdot W\left(-\epsilon_0\exp(\frac{\ln\eta}{h_{\text{Neur.}(\phi)}}-1)\right).
    \label{eq:sdfsfs}
\end{equation}

Note that $\exp(\ln \eta /h_{\mathscr{A}(\phi)} - 1) < \exp(-1)$, i.e.,
\begin{equation}
    \left|-\epsilon_0\exp(\ln \eta /h_{\mathscr{A}(\phi)} - 1)\right| < \exp(-1).
\end{equation}
Therefore, since Lemma \ref{lemma:lam-taylor}, for Eq.(\ref{eq:lambert}), we have 
\begin{equation}
	\begin{aligned}
    -\frac{h_{\mathscr{A}(\phi)}}{\epsilon_0}\cdot W\left(-\epsilon_0\exp(\frac{\ln\eta}{h_{\mathscr{A}(\phi)}}-1)\right) &\approx -\frac{h_{\mathscr{A}(\phi)}}{\epsilon_0}\cdot \left(-\epsilon_0\exp(\frac{\ln\eta}{h_{\mathscr{A}(\phi)}}-1)\right)\\
    & = h_{\mathscr{A}(\phi)}\exp(\frac{\ln\eta}{h_{\mathscr{A}(\phi)}}-1).
	\end{aligned}
	\label{eq:tempsds1sasa23}
	\end{equation}	
Note that, for ODE
$\text{d}\mathbf{u}/\text{d}t = \mathbf{f}(\mathbf{u}), \mathbf{u}(0) = \mathbf{c}_0$, the target label of the neural network in AI-enhanced numerical solver is actually the error term for different numerical solver, which is dominated by $\mathcal{O}\left[\nabla\mathbf{f}(\mathbf{u})\mathbf{f}(\mathbf{u})\right]$. From Assumption \ref{assum:1} and Lemma \ref{lemma:vclemma1}, we have
\begin{equation}
h_{\text{Att.}(\phi)} \leq h_{\text{Neur.}(\phi)}.
\label{eq:dsfsd}
\end{equation}
Therefore, we have

\begin{align*}
    N(\text{AttNS}) &= -\frac{h_{\text{Att.}(\phi)}}{\epsilon_0}\cdot W\left(-\epsilon_0\exp(\frac{\ln\eta}{h_{\text{Att.}(\phi)}}-1)\right)\tag*{Since Eq.(\ref{eq:sdfsfs})}
    \\&\lesssim -\frac{h_{\text{Neur.}(\phi)}}{\epsilon_0}\cdot W\left(-\epsilon_0\exp(\frac{\ln\eta}{h_{\text{Neur.}(\phi)}}-1)\right)\tag*{Since Eq.(\ref{eq:dsfsd}) and Lemma \ref{lemma:dandiao2}}\\
    &=N(\text{AHS}) 
\end{align*}

    \qedhere
\end{proof}

\end{document}